\newif\ifdraft
\newcommandx{\nt}[2][1=]{\todo[linecolor=blue,
			backgroundcolor=blue!10,bordercolor=blue,#1]{#2}}
\newcommandx{\jj}[2][1=]{\todo[linecolor=red,
			backgroundcolor=red!10,bordercolor=red,#1]{{\bf JJ}: #2}}
\def\ct#1{\textcolor{red}{#1}}
\def\ct#1{}
\def\td#1{}
\def\jj#1{}
\newtheorem{theorem}{Theorem}
\newtheorem{proposition}[theorem]{Proposition}
\newtheorem{corollary}[theorem]{Corollary}
\newtheorem{lemma}[theorem]{Lemma}
\newtheorem{problem}{Problem}
\newcommand{\customlabel}[2]{%
\protected@write \@auxout {}{\string \newlabel {#1}{{#2}{}}}}
\def\sag{\textsc{SaG}\xspace}
\def\sagalgo{\textsc{SplitAndGroup}\xspace}
\def\mpp{MPP\xspace}
\def\tmpp{TMPP\xspace}
\def\dmpp{DMPP\xspace}
\definecolor{purp}{rgb}{0.4, 0.1, 0.6}
\font\titlefont=ptmb at 20pt
\title{{\titlefont Average Case Constant Factor Time and Distance Optimal Multi-Robot Path 
Planning in Well-Connected Environments}}
\author{\authorblockN{Jingjin Yu\authorrefmark{1}}
\authorblockA{\authorrefmark{1}Department of Computer Science, 
Rutgers University at New Brunswick, Piscataway, New Jersey, U.S.A. \\ 
jingjin.yu@cs.rutgers.edu
}
}
\begin{document}
\maketitle

\begin{abstract}Fast algorithms for optimal multi-robot path
planning are sought after in real-world applications. Known methods,
however, generally do not simultaneously guarantee good solution optimality 
and good (e.g.,  polynomial) running time. In this work, we develop a first 
low-polynomial running time algorithm, called \sagalgo(\sag), that solves 
the multi-robot path planning problem on grids and grid-like environments, 
and produces constant factor makespan optimal solutions on average over 
all problem instances. 
That is, \sag is an average case $O(1)$-approximation algorithm and
computes solutions with sub-linear makespan. \sag is capable of handling 
cases when the density of robots is extremely high - in a graph-theoretic 
setting, the algorithm supports cases where all vertices of the underlying 
graph are occupied. \sag attains its desirable properties through 
a careful combination of a novel divide-and-conquer technique, which we denote 
as {\em global decoupling}, and network flow based methods for routing the 
robots. Solutions from \sag, in a weaker sense, are also a constant factor 
approximation on total distance optimality. 
\end{abstract}

\section{Introduction}\label{section:introduction}
\begin{figure}[h]
  \begin{overpic}[width=\columnwidth]{./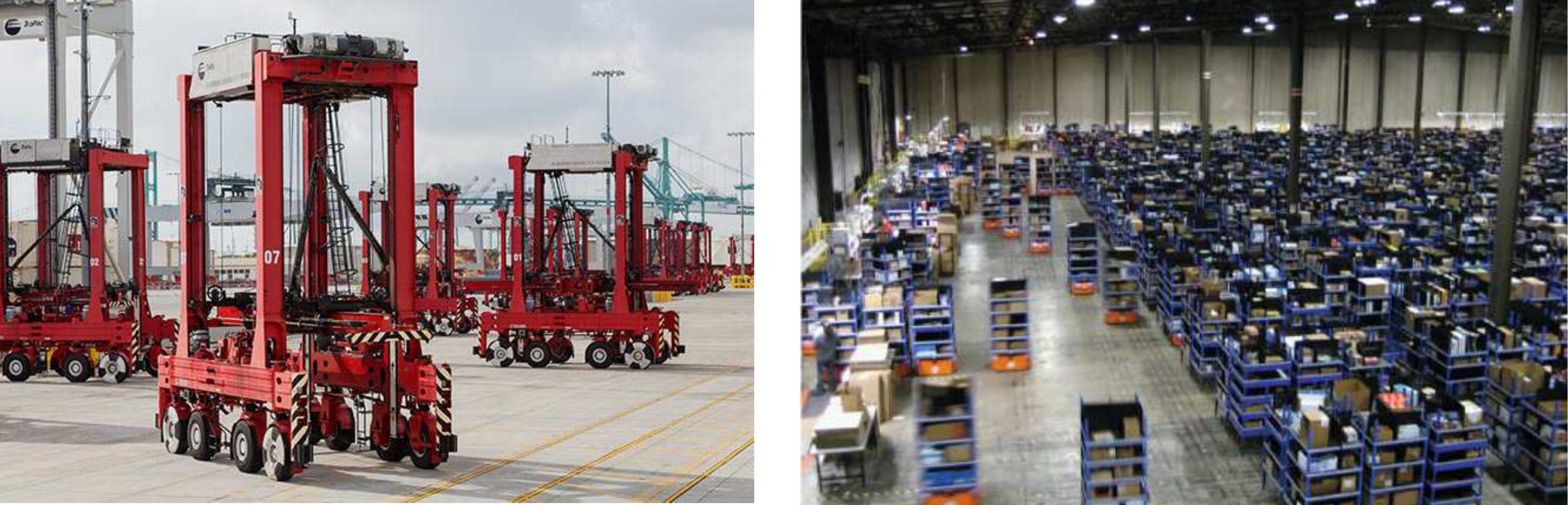}
	\put(22,-4.5){{\small (a)}}
	\put(72,-4.5){{\small (b)}}
	\end{overpic}
	\vspace*{0mm}
  \caption{\label{figure:kiva} (a) Automated straddle carriers at the 
	port of Los Angeles. Each straddle carrier is capable of autonomously 
	navigate to pick up or drop off a shipping container at a designated 
	location. (b) Amazon's Kiva multi-robot system working at its order
	fulfillment centers.}
	\vspace*{-2mm}
\end{figure}Fast methods for multi-robot path planning have found many real-world
applications including shipping container handling (Fig.~\ref{figure:kiva}(a)), 
order fulfillment (Fig.~\ref{figure:kiva}(b)), horticulture, among others, 
drastically improving the associated process efficiency. While commercial 
applications have been able to scale quite well, e.g., a single Amazon 
fulfillment center can operate thousands of Kiva mobile robots, it 
remains unclear what level of optimality is achieved by the underlying 
planning and scheduling algorithms in these applications. As such, there 
remains the opportunity of uncovering structural insights and novel algorithmic 
solutions that substantially improve the throughput of current production 
systems that contain some multi-robot (or more generally, multi-body, where a single 
body may be actuated using some external mechanism) sub-systems. 

The disconnection that exists in applications regarding multi-robot routing may 
be more formally characterized as an {\em optimality-efficiency gap} that has 
been outstanding in the multi-robot research domain for quite some time: 
known algorithms for multi-robot path planning do not simultaneous guarantee 
good solution optimality and fast running time. This is not entirely surprising 
as it is well known that optimal multi-robot path planning problems are 
generally NP-hard \cite{goldreich2011finding,Yu2015IntractabilityPlanar,banfi2017intractability}. 
Nevertheless, whereas these negative results suggest that finding polynomial-time
algorithms that compute exact optimal solutions for multi-robot path planning 
problems is impossible, they do not preclude the existence of polynomial-time 
algorithms that compute approximately optimal solutions. 

Motivated by both practical relevance and theoretical significance, 
in this work, we narrow this optimality-efficiency gap in multi-robot 
path planning, focusing on a class of {\em grid-like, well-connected} 
environments. Here, well-connected environments (to be formally defined) include 
the container shipping port scenario and the Amazon fulfillment 
center scenario. A key property of these environments is that sub-linear 
time-optimal solution is possible, which is not true for general 
environments. Using a careful combination of divide-and-conquer and
network flow techniques, we show that {\em constant factor makespan optimal} 
solutions can be computed in {\em low-polynomial running time} in the 
{\em average case}, where the average is computed over all possible problem instances for 
an arbitrary fixed environment. We call the resulting algorithm \sagalgo(\sag). In 
other words, \sag can efficiently compute $O(1)$-approximate solutions 
on average. The current paper is devoted to establishing the construction,
correctness, and key properties of \sag; we refer readers to \cite{HanRodYu18IROS}
for implementations and performance characteristics of \sag, where these 
issues are examined in detail.  

Intuitively, when the density of the robots are high in a given environment, 
computing solutions for optimally routing these robots will be more difficult. 
With this line of research, our ultimate goal is to achieve a fine-grained 
structural understanding of the multi-robot path planning problem that allows 
the design of algorithms to gracefully balance between robot density and 
computational efficiency. As we know, when the density of robots are low, 
planning can be rather trivial: paths may be planned for individual robots first
and because conflicts are rare, they can be resolved on the fly. In the current 
work, we attack the other end of the spectrum: we focus on the case of having 
a robot occupy each vertex of the underlying discrete graph, i.e., we work with 
the case of highest possible density under the given formulation. Beside the 
obvious theoretical challenge that is involved, we believe the study 
benefits algorithm design for lower density cases. Regarding this, a particularly
interesting tool developed in this work is the {\em global decoupling} technique that enables the 
\sag algorithm. 

\textbf{Contributions}. The main contribution brought forth by this work is 
a first low-polynomial time, deterministic algorithm, \sag, for solving the 
optimal multi-robot path planning problem on grids and grid-like, well-connected 
environments. Under the prescribed settings, \sag computes a solution with 
sub-linear makespan. Moreover, the solution is only a constant multiple
of the optimal solution on average. In a weaker sense, \sag also 
computes solutions with total distance a constant multiple of the optimal for 
a typical instance on average. The results presented in this work expand over 
a conference publication \cite{yu2017expected}. Most notably, this paper 
{\em (i)} provides a fuller account of the motivation and relevance that 
underlie the work, covering both practical and theoretical aspects, and 
{\em (ii)} includes complete proofs for all theorems; many of these proofs are 
much improved versions that are more clear than what appeared (as sketches) in 
\cite{yu2017expected}.

\textbf{Organization}. The rest of the paper is organized as follows. 
Related works are discussed in Sec.~\ref{section:related}. In 
Sec.~\ref{section:formulation}, the discrete multi-robot path planning 
problem is formally defined, followed by analysis on connectivity for 
achieving good solution optimality. This leads us to the choice of 
grid-like environments. We describe the details of \sag
in Sec.~\ref{section:makespan-algorithm}. In Sec.~\ref{section:complexity}, 
complexity and optimality properties of \sag are established. In 
Sec.~\ref{section:general}, we show that \sag generalizes to higher 
dimensions and (grid-like) well-connected environments including continuous
ones. 

\section{Related Work}\label{section:related}
In multi-robot path and motion planning, the main 
goal is for the moving bodies, e.g., robots or vehicles, to reach their 
respective destinations, collision-free. Frequently, certain optimality 
measure (e.g., time, distance, communication) is also imposed. Variations 
of the multi-robot path and motion planning problem have been actively 
studied for decades 
\cite{ErdLoz86,LavHut98b,GuoPar02,Sil05,Rya08,JanStu08,LunBer11,StaKor11,
BerSnoLinMan09,SolHal12,YuLav13STAR,TurMicKum14,ChoLynHutKanBurKavThr05,
blm-rvo,branicky2006sampling,khatib1986real,earl2005iterative,
bekris2007decentralized,knepper2012pedestrian,alonso2015local}. As a 
fundamental problem, it finds applications in a diverse array of areas 
including assembly \cite{HalLatWil00,Nna92}, evacuation \cite{RodAma10}, 
formation \cite{BalArk98,PodSuk04,ShuMurBen07,SmiEgeHow08,TanPapKum04}, 
localization \cite{FoxBurKruThr00}, micro droplet manipulation 
\cite{GriAke05}, object transportation
\cite{MatNilSim95,RusDonJen95}, and search-rescue \cite{JenWheEva97}. 
In industrial applications pertinent to the current work, {\em centralized}
planners are generally employed to enforce global control to drive operational 
efficiency. The algorithm proposed in this work also follows this paradigm. 

Similar to single robot problems involving potentially many degrees of 
freedom \cite{reif1985complexity,Can88}, multi-robot path planning is 
strongly NP-hard even for discs in simple polygons \cite{SpiYak84} and 
PSPACE-hard for translating rectangles \cite{HopSchSha84}. The hardness of 
the problem extends to unlabeled case \cite{KloHut06} where it remains 
highly intractable \cite{HeaDem05,SolHal15}. Nevertheless, under appropriate 
settings, the unlabeled case can be solved near optimally 
\cite{KatYuLav13ICRA-C,TurMicKum14,adler2015efficient,SolYu15}.

Because general (labeled) optimal multi-robot path planning problems in 
continuous domains are extremely challenging, a common approach is to start 
with a discrete setting from the onset. Significant progress has been made on 
solving the problem optimally in discrete settings, in particular on grid-based 
environments. Multi-robot motion planning is less computationally expensive in 
discrete domains, with the feasibility problem readily solvable in $O(|V|^3)$ 
time, in which $|V|$ is the number of vertices of the discrete graph where the 
robots may reside \cite{AulMonParPer99,GorHas10,YuArxiv-1301-2342,YuRus15STAR}. 
In particular, \cite{YuRus15STAR} shows that the setting considered 
in this paper is always feasible except when the grid graph has only four 
vertices (which is a trivial case that can be safely ignored). 

Optimal versions of the problem remain computationally intractable in a 
graph-theoretic setting \cite{goldreich2011finding,RatWar90,YuLav13AAAI,
Yu2015IntractabilityPlanar,banfi2017intractability}, but the complexity has dropped from PSPACE-hard 
to NP-complete in many cases. This has allowed the application of the 
intuitive decoupling-based heuristics \cite{alami1995multi,qutub1997solve,saha2006multi} 
to address several different costs. In \cite{StaKor11}, individual paths
are planned first. Then, interacting paths are grouped together for which 
collision-free paths are scheduled using Operator Decomposition (OD). The 
resulting algorithm can also be made complete (i.e., an anytime algorithm). 
Sub-dimensional expansion techniques (M*) were used in \cite{WagChoC11,ferner2013odrm}
that actively restrict the search domain for groups of robots. Conflict Based 
Search (CBS) \cite{ShaSteFelStu12,sharon2013increasing,boyarski2015icbs} 
maintains a constraint tree (CT) for facilitating its search to resolve 
potential conflicts. With \cite{cohen2016improved}, efficient algorithms are 
supplied that compute solutions with bounded optimality guarantees. Robots with 
kinematic constraints are dealt with in \cite{honig2016multi}. Beyond decoupling, 
other ideas have also been explored, including casting the problem as other 
known NP-hard problems \cite{Sur12,erdem2013general,YuLav16TRO} for which 
high-performance solvers are available. More recently, robustness, longer 
horizon, and other related issues have been studied in detail 
\cite{ma2017lifelong,atzmon2018robust}.

\section{Preliminaries}\label{section:formulation}

In this section, we state the multi-robot path planning problem 
and two important associated optimality objectives, in a graph-theoretic 
setting. Then, we show that working with arbitrary graphs may lead 
to rather sub-optimal solutions (i.e., super-linear with respect to the 
number of vertices). This necessitates the restriction of the graphs if 
desirable optimality results are to be achieved. 

\subsection{Graph-Theoretic Optimal Multi-Robot Path Planning}
Let $G = (V, E)$ be a simple, undirected, and connected graph. A set of $N$ 
labeled robots may move synchronously in a collision-free manner on $G$. At 
integer {\em time steps} starting from $t = 0$, each robot resides on a unique 
vertex of $G$, inducing a {\em configuration} $X$ of the robots. Effectively, 
$X$ is an injective map $X: \{1, \ldots, N\} \to V$ specifying which robot 
occupies which vertex (see Fig.~\ref{fig:problem}). From step $t$ to 
step $t + 1$, a robot may {\em move} from its current vertex to an 
adjacent one under two collision avoidance conditions: {\em (i)} the new 
configuration at $t+ 1$ remains an injective map, i.e., each robot occupies 
a unique vertex, and {\em (ii)} no two robots may travel along the same edge 
in opposite directions.

\begin{figure}[h]
\begin{center}
\begin{overpic}[width=0.5\textwidth,tics=5]{./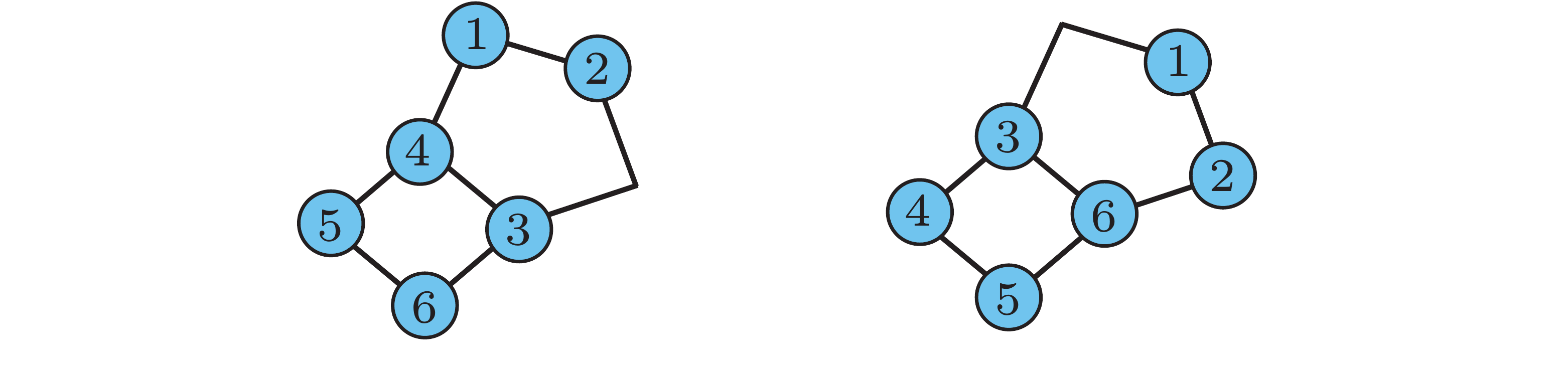}
\put(31, -3){{\small (a)}}
\put(68, -3){{\small (b)}}
\end{overpic}
\end{center}
\caption{\label{fig:problem} Graph-theoretic formulation of the multi-robot 
path planning problem. (a) A configuration of six robots on a graph (roadmap) 
with seven vertices. (b) A configuration that is reachable from (a) in a single 
synchronous move.}  
\end{figure}

A multi-robot path planning problem (\mpp) is fully defined by a 3-tuple 
$(G, X_I, X_G)$ in which $X_I$ and $X_G$ are two configurations. In this work, 
we look at the most constraining case of $|X_I| = |X_G| = |V|$. That is, all 
vertices of $G$ are occupied. We are interested in two optimal \mpp 
formulations. In what follows, {\em makespan} is the time span covering the 
start to the end of a task. All edges of $G$ are assumed to have a length 
of $1$ so that a robot traveling at unit speed can cross it in a single time 
step.

\begin{problem}[Minimum Makespan (\tmpp)] Given $G, X_I$, and $X_G$, compute
a sequence of moves that takes $X_I$ to $X_G$ while minimizing the makespan.
\end{problem}

\begin{problem}[Minimum Total Distance (\dmpp)] Given $G, X_I$, and $X_G$, 
compute a sequence of moves that takes $X_I$ to $X_G$ while minimizing the 
total distance traveled.
\end{problem}

These two problems are NP-hard and cannot always be solved 
simultaneously \cite{YuLav13AAAI}.
\subsection{Effects of Environment Connectivity}
The well-known {\em pebble motion} problems, which are highly similar 
to \mpp, may require $\Omega(|V|^3)$ individual moves to solve \cite{Kor84}.
Since each pebble (robot) may only move once per step, at most $|V|$ individual 
moves can happen in a step. This implies that pebble motion problems, 
even with synchronous moves, can have an optimal makespan of $\Omega(|V|^2)$, 
which is super linear (i.e. $\omega(|V|)$). The same is true for \tmpp under 
certain graph topologies. We first prove a simple but useful lemma for a class 
of graphs we call {\em figure-8} graphs. In such a graph, there are 
$|V| = 7n + 6$ vertices for some integer $n \ge 0$. The graph is formed by three 
disjoint paths of lengths $n, 3n+2$, and $3n+2$, meeting at two common end 
vertices. Figure-8 graphs with $n = 1$ are illustrated in Fig.~\ref{fig:exchange}.

An interesting property of figure-8 graphs is that an arbitrary MPP instance
on such a graph is feasible. 
\begin{lemma}\label{l:f8-feasible}An arbitrary \mpp instance $(G, X_I, X_G)$ 
is feasible when $G$ is a figure-8 graph. 
\end{lemma}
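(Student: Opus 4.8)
The plan is to show that on a figure-8 graph $G$ the set of robot permutations realizable by synchronous moves is the full symmetric group $S_{|V|}$ on the occupied vertices; feasibility of $(G, X_I, X_G)$ then follows immediately, since we need only realize the permutation $\pi = X_G \circ X_I^{-1}$ that carries each robot from its current vertex to its target vertex.

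First I would reduce to the maximally occupied case $|X_I| = |X_G| = |V|$. If some vertices are empty, place indistinguishable ``virtual'' robots on them; because $X_I$ and $X_G$ leave equally many vertices empty, any permutation of $V$ agreeing with $\pi$ on the genuinely occupied vertices (and matching empty vertices to empty vertices by an arbitrary bijection) realizes the instance. Thus it suffices to generate every permutation in the fully occupied setting. There, since every vertex is occupied, collision-avoidance conditions (i)--(ii) force each valid synchronous move to be a rotation of robots along a set of vertex-disjoint cycles of length at least $3$ (a $2$-cycle would mean two robots swapping across one edge, which is forbidden). On a figure-8 graph the only cycles are the three ``face'' cycles obtained by pairing two of the three paths, and any two of them share the two meeting vertices; hence a single move rotates exactly one face cycle, though we may compose such rotations freely over successive time steps.

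The heart of the argument, and the step I expect to be the main obstacle, is to construct an \emph{exchange}: a finite sequence of such rotations whose net effect transposes two chosen robots on adjacent vertices while returning every other robot to its starting vertex. This is a shunting maneuver in the spirit of Fig.~\ref{fig:exchange}, in which the third path serves as a siding that lets the two target robots pass one another; the delicate part is verifying that, after the robots have been routed past each other, a complementary sequence of rotations restores all remaining robots \emph{exactly}. A convenient way to organize the bookkeeping is as a commutator $R_A R_C R_A^{-1} R_C^{-1}$ of rotations along two overlapping face cycles $A$ and $C$, chosen so that cancellation occurs everywhere except on the short arc where the two cycles diverge, leaving a permutation of small support that can then be refined to a single transposition.

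Finally I would propagate this one exchange to all of $S_{|V|}$. Conjugating the basic transposition by a face-cycle rotation $R$ yields the transposition of whichever two robots $R$ brings into the exchange positions, so sweeping $R$ around each cycle produces the transpositions of all consecutive robot pairs; since $G$ is connected these adjacent transpositions generate $S_{|V|}$. As a consistency check, note that each face cycle of a figure-8 graph has even length, so a single rotation is a $k$-cycle with $k$ even, i.e.\ an odd permutation; the achievable group therefore is not confined to the even permutations, exactly as it must be if it is to equal $S_{|V|}$. Decomposing $\pi$ into transpositions and realizing each by a suitably relocated exchange then carries $X_I$ to $X_G$, establishing feasibility.
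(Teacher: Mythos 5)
Your skeleton is exactly the paper's: realize a collision-free exchange (``2-switch'') of two adjacent robots, relocate it anywhere by conjugating with cycle rotations, and then decompose the target permutation $\pi = X_G \circ X_I^{-1}$ into such transpositions, which generate $S_{|V|}$. Your preliminary observations are also sound: with all vertices occupied, conditions (i)--(ii) do force each synchronous move to be a rotation of a single face cycle (all three cycles of a figure-8 graph share the two junction vertices, so no two can rotate simultaneously), and your parity check is correct, since the face cycles have even length and hence each rotation is an odd permutation, as needed for the reachable group to be all of $S_{|V|}$ rather than the alternating group.

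However, the one step you yourself flag as ``the main obstacle'' is precisely the content of the lemma, and your sketch of it does not close. The commutator $R_A R_C R_A^{-1} R_C^{-1}$ is a product of an even number of rotations with cancelling signs --- it is always an \emph{even} permutation --- so it can never equal a transposition; generically it yields a $3$-cycle or similar small even permutation. Your phrase ``refined to a single transposition'' is exactly where the missing work lives: you would need to compose with an odd element (a rotation) and verify, by explicit tracking of every robot, that the net effect is a transposition with all other robots restored. The paper discharges this by exhibiting a concrete, scale-invariant three-step plan (Fig.~\ref{fig:exchange}) that swaps two designated adjacent robots on the figure-8 graph, after which the conjugation-and-decomposition argument you describe goes through verbatim. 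An alternative way to complete your group-theoretic route --- show the commutator gives a $3$-cycle, invoke transitivity and Jordan-type arguments to conclude the group contains $A_{|V|}$, then adjoin an odd rotation to get $S_{|V|}$ --- would also work, but you have neither carried out that computation nor stated those lemmas, so as written the proof has a genuine gap at its core construction.
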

\begin{proof}
Using the three-step plan provided in Fig.~\ref{fig:exchange}, we may 
exchange the locations of robots $1$ and $2$ without collision. This 
three-step plan is scale invariant and applies to any $n$. With the 
three-step plan, the locations of any two adjacent robots (e.g., robots 
$4$ and $5$ in the top left figure of Fig.~\ref{fig:exchange}) can be 
exchanged. To do so, we may first rotate the two adjacent robots of interest 
to the locations of robots $1$ and $2$, do the exchange using the three-step 
plan, and then reverse the initial rotation. Let us denote such a sequence 
of moves as a {\em 2-switch} (more formally known as a {\em transposition} 
in group theory). Because the exchange of any two robots on the 
figure-8 graph can be decomposed into a sequence of 2-switches, such exchanges 
are always feasible. As an example, the exchange of robots $4$ and $9$ can 
be carried out using a 2-switch sequence $\langle (3, 4), (2, 4), (1, 4), 
(4, 9), (1, 9), (2, 9), (3, 9)\rangle$, of which each individual pair consists
of two adjacent robots after the previous 2-switch is completed. Because solving 
the \mpp instance $(G, X_I, X_G)$ can be always decomposed into a sequence of 
two-robot exchanges, arbitrary \mpp instances are solvable on figure-8 
graphs. ~\qed
\end{proof}
\begin{figure}[h]
\begin{center}
\begin{overpic}[width=0.5\textwidth,tics=5]{./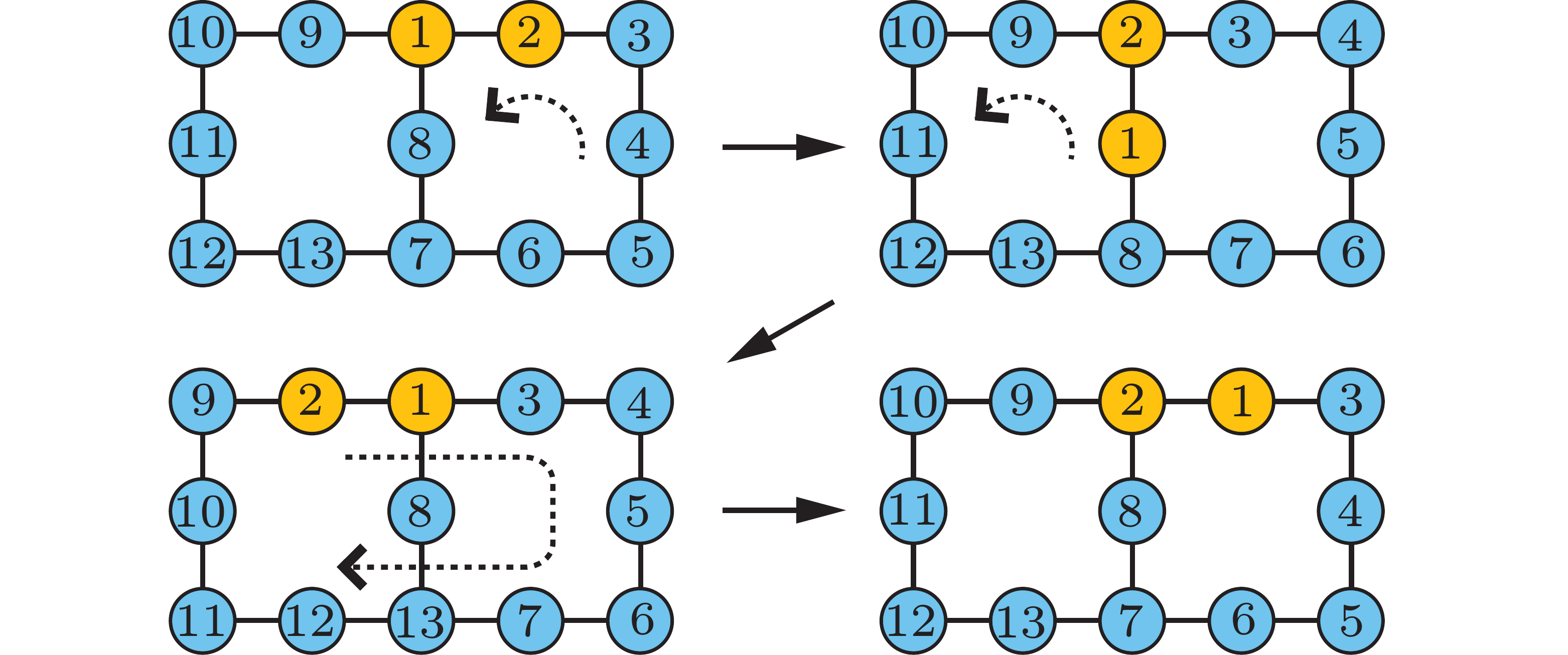}
\end{overpic}
\end{center}
\caption{\label{fig:exchange} A three-step plan for exchanging robots 
$1$ and $2$ on a figure-8 graph with $7n + 6$ vertices ($n = 1$ in 
this case).}  
\end{figure}

The introduction of figure-8 graphs allows us to formally establish that 
sub-linear optimal solutions are not possible on an arbitrary connected
graph. 

\begin{proposition}\label{t:makespan-lower-bound}There exists an infinite family 
of \tmpp instances on figure-8 graphs with $\omega(|V|)$ minimum makespan. 
\end{proposition}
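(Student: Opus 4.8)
The plan is to establish a super-linear \emph{worst-case} lower bound through an information-theoretic counting argument, exploiting how severely the full-occupancy constraint restricts a single synchronous move on a figure-8 graph.

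First I would prove a structural lemma characterizing what one step can do. Since every vertex is occupied and each configuration is injective, the map sending every vertex to the position of its robot at the next step is a permutation of $V$; its nontrivial orbits are closed walks along edges on distinct vertices, i.e. simple cycles of $G$, and the no-swap condition forbids $2$-cycles. Hence a single synchronous move is a product of rotations along vertex-disjoint simple cycles, each shifting its robots by exactly one edge. The point specific to a figure-8 graph is that it contains exactly three simple cycles, one for each pair of the three paths, and all three pass through both hub vertices where the paths meet. Therefore no two of them are vertex-disjoint, so at most one cycle can rotate per step. Counting the choice of cycle, the choice of direction, and the option of no movement, the number of distinct moves available at any step is at most $3\cdot 2 + 1 = 7$.

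Next I would convert this bounded branching into a counting bound. Fixing an initial configuration $X_I$, any plan of makespan $T$ is a sequence of $T$ moves each drawn from at most $7$ options (padding shorter plans with the identity move), so at most $7^{T}$ configurations are reachable from $X_I$ within $T$ steps. By the feasibility result of Lemma~\ref{l:f8-feasible} every one of the $|V|!$ configurations is a legitimate target, so whenever $7^{T} < |V|!$ there must exist a target $X_G$ unreachable within $T$ steps. Taking $T = \lceil \log_{7}(|V|!)\rceil - 1$ thus exhibits an instance $(G, X_I, X_G)$ whose minimum makespan exceeds $T$, i.e. is $\Omega(\log(|V|!))$. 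By Stirling's estimate $\log(|V|!) = \Theta(|V|\log|V|)$, so this instance has minimum makespan $\Theta(|V|\log|V|) = \omega(|V|)$. Finally, letting $n$ range over all nonnegative integers yields a distinct figure-8 graph with $|V| = 7n+6$ for each $n$, and the construction above produces on each one an instance with $\omega(|V|)$ minimum makespan, giving the desired infinite family.

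I expect the main obstacle to be the structural lemma, specifically the careful argument that each synchronous step is a single full-cycle rotation and therefore that the per-step branching factor is a constant; everything downstream is a short counting computation. The entire super-linearity hinges on the geometric observation that the three simple cycles of a figure-8 graph mutually overlap at the two hubs, collapsing the branching factor to $O(1)$. I would also verify the benign edge cases (small $n$, and whether both rotation directions and the null move are each counted), but these affect only the base of the logarithm, not the $\omega(|V|)$ asymptotics.
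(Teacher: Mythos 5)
Your proposal is correct and follows essentially the same route as the paper: bound the per-step branching factor by a constant (the paper uses $6$, you use $7$ counting the identity move) because the three simple cycles of a figure-8 graph pairwise intersect at the hubs, then compare $c^{T}$ against the $|V|!$ reachable configurations guaranteed by Lemma~\ref{l:f8-feasible} and apply Stirling to get an $\Omega(|V|\log|V|) = \omega(|V|)$ worst-case makespan. Your only addition is the permutation-orbit lemma rigorously justifying that each synchronous move rotates at most one cycle, a step the paper asserts without proof.
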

\begin{proof}We will establish the claim on the family of figure-8 graphs. 
By Lemma~\ref{l:f8-feasible}, there exists a sequence of moves that takes
arbitrary configuration $X_I$ to arbitrary configuration $X_G$. For a 
figure-8 graph with $|V|$ vertices, there are $|V|!$ possible 
configurations. Starting from an arbitrary configuration $X_I$, let us 
build a tree of adjacent configurations (two configurations are adjacent if 
a single move changes one configuration to the other) with $X_I$ as the root 
and estimate its height $h_T$, which bounds the minimum possible makespan. In 
each move, only one of the three cycles on the figure-8 graph may be used to 
move the robots and each cycle may be moved in clockwise or counterclockwise 
direction; no two cycles may be rotated simultaneously. Therefore, the tree has 
a branching factor of at most $6$. Assume the best case in which the tree 
is balanced and has no duplicate nodes (i.e., configuration), we can bound 
$h_T$ as $6^{h_T+1} \ge |V|!$. That is, the tree must have at least $|V|!$ 
unique configuration nodes derived from the root $X_I$, because all $|V|!$ 
configurations are reachable from $X_I$. With Stirling's approximation 
\cite{bollobas2013modern}, 
\makeatletter
\setlength{\@mathmargin}{5.5em}
\makeatother
\[|V|! \ge \sqrt{2\pi|V|}(\frac{|V|}{e})^{|V|},\]
which yields 
\makeatletter
\setlength{\@mathmargin}{6em}
\makeatother
\[h_T = \Omega(|V|\log|V|).\]
This shows that solving some 
instances on figure-8 graphs requires $\Omega(|V|\log|V|)$ steps, 
establishing that \tmpp could require a minimum makespan of $\omega(|V|)$. 

Because $n$ in the figure-$8$ graph is an arbitrary non-negative 
integer, $|V|$ has an infinite number of values. Hence, there is an 
infinite family of such graphs. 
~\qed
\end{proof}

Proposition~\ref{t:makespan-lower-bound} implies that if the classes of graphs
are not restricted, we cannot always hope for the existence of solutions 
with linear or better makespan with respect to the number of vertices of the 
graphs, i.e., 

\begin{corollary}\label{c:no-general-linear}\tmpp does not admit solutions 
with linear or sub-linear makespan on an arbitrary graph.  
\end{corollary}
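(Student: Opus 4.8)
The plan is to derive the corollary directly from Proposition~\ref{t:makespan-lower-bound} by a short contrapositive argument, since the corollary is essentially a worst-case restatement of that proposition. First I would make precise what the phrase ``admits solutions with linear or sub-linear makespan on an arbitrary graph'' actually asserts: it would require a single bound of the form $O(|V|)$ such that, for \emph{every} connected graph $G$ and \emph{every} pair of configurations $(X_I, X_G)$, the minimum makespan of the corresponding \tmpp instance is at most this bound. In other words, the statement to be refuted is a universal (worst-case) upper bound of $O(|V|)$ holding uniformly over the entire class of instances.

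Next I would invoke Proposition~\ref{t:makespan-lower-bound}, which exhibits an infinite family of \tmpp instances---realized on figure-8 graphs---whose minimum makespan is $\Omega(|V|\log|V|)$, and hence $\omega(|V|)$. Because $\omega(|V|)$ is asymptotically strictly larger than any $O(|V|)$ bound, these instances provably cannot be solved within linear makespan. Since figure-8 graphs are themselves connected graphs, and thus members of the ``arbitrary graph'' class quantified over in the corollary, their mere existence contradicts the hypothesized universal $O(|V|)$ makespan guarantee.

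The argument is therefore immediate, and I do not anticipate a genuine obstacle. The only point requiring care is the correct reading of the quantifiers: the corollary is a statement about the \emph{worst case} over all graphs, so a single infinite family of hard instances suffices to establish it, and there is no need to argue that \emph{every} graph is hard. Formally, the negation of Proposition~\ref{t:makespan-lower-bound}'s existence claim would be precisely the assertion that a linear or sub-linear makespan guarantee holds across all arbitrary graphs; since that existence claim is already established, the corollary follows at once.
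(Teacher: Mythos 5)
Your proposal is correct and matches the paper's own reasoning exactly: the paper states Corollary~\ref{c:no-general-linear} as an immediate consequence of Proposition~\ref{t:makespan-lower-bound}, since the infinite family of figure-8 instances with $\omega(|V|)$ minimum makespan directly refutes any uniform linear or sub-linear makespan guarantee over arbitrary graphs. Your added care in spelling out the quantifiers (that a single hard family suffices to defeat a worst-case universal bound) is exactly the implicit content of the paper's one-line derivation.
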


Corollary~\ref{c:no-general-linear} suggests that seeking general algorithms 
for providing linear or sub-linear makespan that apply to all environments 
will be a fruitless attempt. With this in mind, the paper mainly focuses
on a restricted but very practical class of discrete environments: grid graphs.

\section{Routing Robots on Rectangular Grids with a Sub-Linear Makespan}
\label{section:makespan-algorithm}
\subsection{Main Result}
We first outline the main algorithmic result of this work and the key enabling 
idea behind it, a divide-and-conquer scheme which we denote as {\em global 
decoupling}.

Assuming unit edge lengths, 
a rectangular grid is fully specified by two integers $m_{\ell}$ and $m_s$, 
representing the number of vertices on the long and short sides of the 
 grid, respectively. Without loss of generality, assume that 
$m_{\ell} \ge m_s$ (see Fig.~\ref{fig:sketch} for a $8 \times 4$ grid). 
We further assume that $m_{\ell} \ge 3$ and $m_s \ge 2$ since an \mpp on 
a smaller grid is trivial. These assumptions are implicitly assumed in this 
paper whenever {\em grid} is mentioned, unless otherwise stated. We note that
an \mpp problem on such a grid is always feasible, as established formally in 
\cite{YuRus15STAR}. The main result to be proven in this section is the 
following. 
\begin{theorem}\label{t:grid-linear-makespan}
Let $(G, X_I, X_G)$ be an arbitrary \tmpp instance in which $G$ is an 
$m_{\ell} \times m_s$ grid. The instance admits a solution with 
$O(m_{\ell})$ makespan. 
\end{theorem}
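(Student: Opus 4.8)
The plan is to realise the vertex permutation $\sigma = X_G\circ X_I^{-1}$ (which must carry the robot sitting at each start vertex to its goal vertex) by a three-phase, axis-by-axis routing scheme — the \emph{global decoupling} alluded to in the statement — and to charge each phase to one of the two side lengths. Orient the grid so that it has $m_s$ rows of length $m_\ell$ and $m_\ell$ columns of length $m_s$. First I would decompose $\sigma$ into (a) a permutation acting \emph{within each column}, (b) a permutation acting \emph{within each row}, and (c) a second \emph{within-column} permutation, whose composition is $\sigma$. The existence of such a decomposition is the classical balancing lemma for mesh routing: form the bipartite multigraph whose two sides are the source columns and the destination columns, with one edge per robot joining its current column to the column containing its goal. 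Because every configuration is a bijection onto all of $V$, this multigraph is $m_s$-regular, so by K\"onig's edge-colouring theorem it splits into $m_s$ perfect matchings. Sending each robot in phase (a) to the row whose index equals its matching colour then guarantees that each row afterwards carries exactly one robot destined for each column, i.e.\ phase (b) is a genuine within-row permutation; phase (c) finally drops every robot onto its correct row. The intermediate targets (the colouring) are computed by the matching/network-flow subroutine underlying \sag.

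The second ingredient is to execute each phase on the \emph{fully occupied} grid in $O(m_\ell)$ synchronous moves, and this is where I expect the real difficulty to lie. Since $|X_I| = |V|$, no vertex is ever empty, so every legal move is a disjoint union of cyclic rotations; in particular a single column or row, being a path, admits no move at all. A within-column phase therefore cannot be carried out inside the columns and must borrow cells from an orthogonal neighbour to manufacture the cycles that drive the rotations. I would implement a within-column phase by an odd--even transposition sort along each column, whose elementary comparator-exchange (swapping two vertically adjacent robots while leaving the rest fixed) is effected by a constant-size rotation gadget — a grid analogue of the $2$-switch constructed in Lemma~\ref{l:f8-feasible} — that temporarily uses a bounded number of cells of an adjacent column and then restores them. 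To keep these borrowings non-interfering I would two-colour the columns and handle the two colour classes in succession, and within a column use the standard alternating schedule so that simultaneously active comparators are vertex-disjoint. An odd--even sort on a length-$m_s$ array uses $m_s$ rounds of $O(1)$ moves each, so a within-column phase costs $O(m_s)$; the symmetric argument gives $O(m_\ell)$ for the within-row phase. Summing, $O(m_s) + O(m_\ell) + O(m_s) = O(m_\ell)$, as $m_s \le m_\ell$.

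The two places where the fully-occupied constraint bites are the steps I would scrutinise most. First, one must verify that a constant-size, fully-occupied gadget can realise a single transposition while fixing every other robot in its footprint: a lone unit-square rotation is a $4$-cycle, which cannot isolate a transposition, so the gadget has to combine the several cycles available in a $2\times 3$ (or $3\times 2$) block, and one must check both that the generated subgroup contains the required swap and that such a block always fits under the standing assumptions $m_\ell \ge 3$, $m_s \ge 2$. Second, one must confirm that the three phases compose to exactly $\sigma$ despite the transient corruption of neighbouring cells during each gadget; this is precisely what the column/row two-colouring together with the reversibility of each gadget is there to ensure. Granting these, the diameter bound $m_\ell + m_s - 2 = O(m_\ell)$ shows the result is order-optimal, so the only remaining quantity to track is the constant hidden in $O(m_\ell)$.
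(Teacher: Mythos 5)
Your argument is sound, but it takes a genuinely different route from the paper's. The paper proves the theorem recursively via its split-and-group scheme: the grid is halved, a minimum-cost matching assigns each crossing robot an exit column, the resulting straight and $L$-shaped paths are merged into trees spanning the split line, crossovers between trees are removed, and the exchange on each tree is performed via Theorem~\ref{t:tree-shift} (built on Lemmas~\ref{l:herd} and~\ref{l:line-shift}); the per-iteration $O(m_\ell)$ cost then telescopes over $O(\log m_\ell)$ iterations. You instead solve the whole permutation in a single pass with the classical three-phase mesh-routing decomposition---column/row/column, obtained from K\"onig edge-colouring of the $m_s$-regular column-to-column bipartite multigraph---and execute each phase by odd--even transposition sort. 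The one ingredient both proofs share is exactly where the full-occupancy difficulty is concentrated: the constant-time parallel exchange of vertex-disjoint adjacent pairs, which the paper isolates as Lemma~\ref{l:constant-flip} using the $3\times 2$ figure-8 gadget of Lemma~\ref{l:f8-feasible}. Had you invoked Lemma~\ref{l:constant-flip} directly, each odd--even round would be a single {\sc flip($\cdot$)} on the (vertex-disjoint) set of out-of-order pairs and your interference analysis would disappear; note that your plain two-colouring of columns is not quite sufficient on its own, since a $3\times 2$ gadget spans three cells in one direction (the paper needs up to four block partitions), but this affects only the hidden constant. Your route buys simplicity and sharper bookkeeping: no recursion, no tree forming, no crossover removal, no interacting-bundle case analysis, an explicit count of roughly $2m_s+m_\ell$ gadget rounds, and a cheaper subroutine (edge colouring of a regular bipartite multigraph versus the Hungarian algorithm). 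The paper's route buys reusable machinery: the matching and tree-exchange apparatus is what it later extends to higher dimensions and to general well-connected environments, and its recursive ``global decoupling'' only requires robots to reach the correct region at each scale rather than exact intermediate positions, which is the structural idea the paper is promoting; your decomposition also generalizes (to $2d-1$ phases in $d$ dimensions) but is more tightly wedded to the row/column structure of the grid.
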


Note that the $O(m_{\ell})$ bound is {\em sub-linear} with respect to the 
number of vertices, which is $\Omega(m_sm_{\ell})$ and $\Omega(m_{\ell}^2)$ 
for square grids. We name the algorithm, to be constructed, as \sagalgo 
(\sag) and first sketch how the divide-and-conquer algorithm works at a 
high level. In this section we focus on the makespan property  
of \sag. We delay the establishment of polynomial-time complexity and 
additional properties of the algorithm to Section~\ref{section:complexity}. 

Assume without loss of generality that $m_{\ell} = 2^{k_1}$ $m_s = 2^{k_2}$ 
for some integers $k_1$ and $k_2$ (we note that our algorithm does not depend on 
$m_{\ell}$ and $m_s$ being powers of $2$ at all; the assumption only serves to 
simplify this high-level explanation). 
In the first iteration of \sag, it {\em splits} the grid into two smaller 
rectangular grids, $G_1$ and $G_2$, of size $2^{k_1-1} \times 2^{k_2}$ each. 
Then, robots are moved so that at the end of the iteration, if a robot has its
goal in $G_1$ (resp., $G_2$) in $X_G$, it should be on some arbitrary vertex of 
$G_1$ (resp., $G_2$). This is the {\em group} operation. An example of a single \sag 
iteration is shown in Fig.~\ref{fig:sketch}. We will show that such an 
iteration can be completed in $O(m_{\ell}) = O(2^{k_1})$ steps (makespan). 
In the second iteration, the same process is carried out on both $G_1$ and 
$G_2$ in parallel, which again requires $O(m_{\ell}) = O(2^{k_1})$ steps. 
In the third iteration, we start with four $2^{k_1-1} \times 2^{k_2-1}$ 
grids and the iteration can be completed in $O(2^{k_1-1}) = 
O(\frac{m_{\ell}}{2})$ steps. After $2k_1$ iterations, the problem is solved 
with a makespan of 
\makeatletter
\setlength{\@mathmargin}{1em}
\makeatother
\[2O(m_{\ell}) + 2O(\frac{m_{\ell}}{2}) + 2O(\frac{m_{\ell}}{4}) + \ldots + 2O(1) = O(m_{\ell}).\]

\begin{figure}[htp]
\begin{center}
\begin{overpic}[width=0.5\textwidth,tics=5]
{./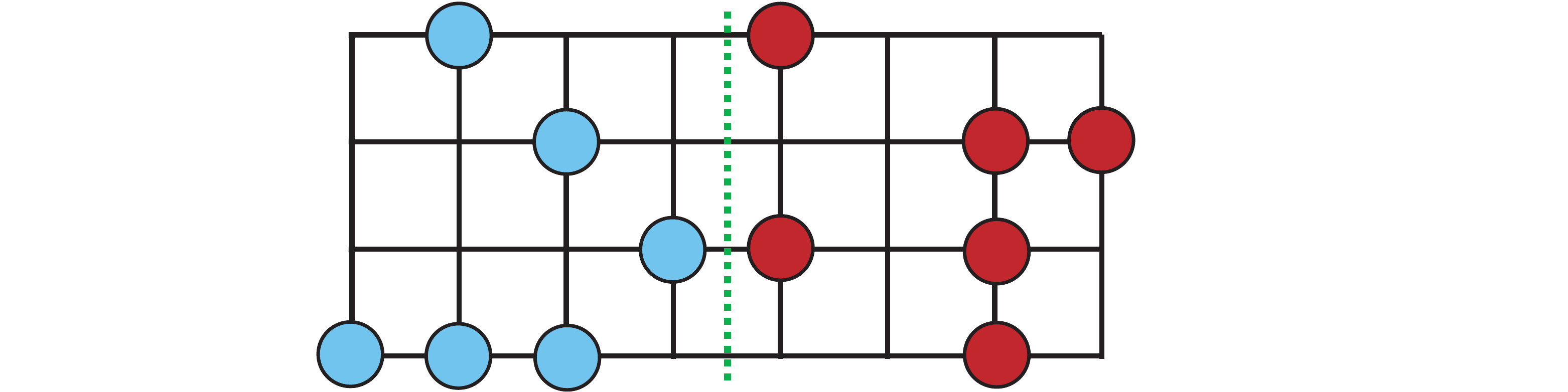}
\put(13, 11){{\small $G_1$}}
\put(75, 11){{\small $G_2$}}
\end{overpic}
\end{center}
\caption{\label{fig:sketch} Illustration of a single iteration of \sag on 
an $8 \times 4$ grid. Note that the grid is fully populated with robots and 
some are not shown in the figure. The overall grid is split in the middle by 
the dotted line to give two $4 \times 4$ grids, $G_1$ and $G_2$. The robots 
shown on $G_1$ (resp., $G_2$) have goal locations on $G_2$ (resp., $G_1$). 
In the group operation, these robots must move across the split line after 
the operation is complete. Other robots (not shown) on the grid must be where 
they were after the operation. In the next iteration, the same procedure is 
applied to $G_1$ and $G_2$ in parallel.}  
\end{figure}

The divide-and-conquer approach that we use share similarities with other 
decoupling techniques in that it seeks to break down the overall problem into 
independent sub-problems. On the other hand, it significant differs from 
previous decoupling schemes in that the decoupling in our case is {\em 
global}. Therefore, we denote the scheme as the {\em global decoupling} 
technique. 

We proceed to describe an iteration of the \sag algorithm in detail, 
which depends on following sub-routines, in a sequential manner (i.e., 
a later sub-routine makes use of the earlier ones):
\begin{itemize}
\item Concurrent exchange of multiple pairs robots embedded in a grid in 
a constant number of steps (Lemma~\ref{l:constant-flip}). 
\item Exchange of two groups of robots on a tree embedded in a grid in time
steps linear with respect to the diameter (i.e., length of the longest path) 
of the tree (Lemma~\ref{l:herd}, Lemma~\ref{l:line-shift}, and Theorem~\ref{t:tree-shift}). 
\item Partitioning a split problem into multiple exchange problems on trees 
and solving them concurrently. 
\end{itemize}
Each of these steps is covered in a sub-section that follows. 

\subsection{Pairwise Exchanges In A Constant Number of Steps}
To achieve  
$O(m_{\ell})$ makespan, \sag needs to enable concurrent robot movements. 
This is challenging because of our worst case assumption that there are as 
many robots as the number of vertices. This is where the grid graph 
assumption becomes critical: it enables the concurrent ``flipping'' or 
``bubbling'' of robots. Let $G = (V, E)$ be an $m_{\ell} \times m_s$ 
grid graph whose vertices are fully occupied by robots. Let $E' \subset 
E$ be a set of vertex disjoint edges of $G$. Suppose for each edge $e = 
(v_1, v_2) \in E'$, we would like to simulate the exchange of the two 
robots on $v_1$ and $v_2$ without incurring collision. Let us call this 
operation {\sc flip($E'$)}. We use {\sc flip($\cdot$)} to mean the operation
is applied to some unspecified set of edges, which is to be determined for 
the particular situation. 

\begin{lemma}\label{l:constant-flip}
Let $G = (V, E)$ be an $m_{\ell} \times m_s$ grid. Let 
$E' \subset E$ be a set of vertex disjoint edges. Then the {\sc flip($E'$)} 
operation can be completed in a constant number of steps. 
\end{lemma}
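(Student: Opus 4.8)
The plan is to first construct a bounded-size, constant-step maneuver that exchanges the two robots on a single edge while returning every other robot it disturbs to its original vertex, and then to schedule such maneuvers for all edges of $E'$ concurrently by partitioning $E'$ into a constant number of mutually non-interfering batches. The makespan bound then follows since each batch costs a constant number of steps and there are only constantly many batches.

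For the single-edge gadget I would first record a parity observation that both motivates the construction and shows why a constant strictly larger than one is unavoidable: because the grid is fully occupied there are no empty vertices, so every legal synchronous move is a disjoint union of rotations of (necessarily even-length) cycles. In particular, a direct exchange of the two endpoints along their common edge is forbidden by collision condition (ii), and no rotation confined to a single $2\times 2$ block can realize a single transposition, since the two orientations of a $2\times 2$ rotation generate only the cyclic group $\mathbb{Z}_4$, which contains no transposition. I would therefore use a $2\times 3$ workspace: place the target edge as the two left cells of the top row, label the six cells $p_1,\dots,p_6$, and let $C_L$ and $C_R$ denote the two $2\times 2$ sub-block rotations sharing the middle vertical edge. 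A direct check shows that the commutator $C_L\,C_R\,C_L^{-1}\,C_R^{-1}$, four synchronous moves in all, swaps the robots on $p_1$ and $p_2$ and returns the other four robots to their starting cells. Rotating the picture by $90^\circ$ yields the analogous four-move gadget for a vertical edge in a $3\times 2$ workspace. Since $m_\ell \ge 3$ and $m_s \ge 2$, a suitably oriented six-cell workspace exists for every edge, with the construction mirrored toward the interior when the edge abuts the right or bottom boundary.

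To realize all of $E'$, I would assign to each edge $e$ a deterministic six-cell workspace $W(e)$ and consider the conflict graph $H$ on $E'$ whose edges join pairs with $W(e)\cap W(e')\neq\varnothing$. Because each workspace is a fixed-size box and a conflicting $e'$ must have an endpoint within bounded distance of $W(e)$, and because $E'$ is a matching (so each vertex lies on at most one edge), only $O(1)$ edges can conflict with a given one; hence $H$ has bounded maximum degree and is $O(1)$-colorable. Concretely, a coordinate-residue assignment makes this transparent: classifying horizontal edges by the residues of their coordinates modulo a fixed constant (and likewise for vertical edges, with boundary edges handled separately) already forces the workspaces within one class to be pairwise disjoint. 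Within one color class the gadgets act on disjoint cell sets, so they run in parallel without interference and finish in four steps; executing the $O(1)$ classes one after another therefore completes the flip of all of $E'$ in $O(1)$ total steps. It is important here that, since $E'$ is vertex-disjoint, no robot is an endpoint of two edges, and that each gadget restores its auxiliary robots, so a robot used as an auxiliary in one class is back in place before a later class needs it.

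I expect the gadget itself to be routine once the commutator is exhibited, and anticipate that the main obstacle is the concurrency bookkeeping: choosing the workspaces so that the overlap graph provably has bounded degree, treating edges against the grid boundary by mirroring the gadget toward the interior, and checking that several vertex-disjoint edges sharing a small region can still be distributed across distinct color classes. Care is also needed to confirm that every move invoked is legal in the full-occupancy regime, i.e.\ that each batch is a union of cycle rotations on the active workspaces with all other robots held fixed, so that the injectivity and opposite-edge constraints are never violated.
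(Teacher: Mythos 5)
There is a genuine gap, and it sits in the core single-edge gadget. Your commutator $C_L\,C_R\,C_L^{-1}\,C_R^{-1}$ cannot swap the robots on $p_1$ and $p_2$ while restoring the other four robots, and the obstruction is exactly the parity observation you yourself open with: inside a fully occupied $2\times 3$ workspace every synchronous move is the rotation of a \emph{single} cycle (the two $2\times 2$ blocks and the outer $6$-cycle pairwise share vertices, so no move can rotate two of them simultaneously), and each such rotation is an odd permutation, since $4$-cycles and $6$-cycles are odd. Any four-move sequence therefore composes to an \emph{even} permutation, while a single transposition is odd, so no four-move gadget of this kind exists, whatever rotation directions you choose. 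Direct computation confirms this: labeling the cells $p_1p_2p_3$ over $p_4p_5p_6$, the commutator equals the double transposition $(p_1\,p_2)(p_5\,p_6)$, i.e., it also swaps two of the auxiliary robots. This is not a harmless slip, because your batching step relies precisely on the property that each gadget restores its auxiliaries; with the gadget as stated, executing the color classes in sequence would leave robots outside $E'$ permanently displaced, and the lemma's conclusion fails.

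The repair is to use a gadget with an \emph{odd} number of rotations, which is what the paper does: a $3\times 2$ block is a figure-8 graph with $n=0$, and the three-step plan behind Lemma~\ref{l:f8-feasible} (for instance: rotate the right $2\times 2$ block, then the outer $6$-cycle, then the left $2\times 2$ block, with suitable orientations) realizes the clean transposition of the two robots on the middle edge; edges in other positions are handled by conjugating with one extra rotation that carries the pair onto the middle edge and back. Once the gadget is fixed, the rest of your argument --- fixed six-cell workspaces, a bounded-degree conflict graph, an $O(1)$ coloring, and sequential batches whose gadgets act on disjoint cell sets --- is sound, and it is essentially the paper's own scheme in different clothing: the paper uses up to four fixed partitions of the grid into disjoint $3\times 2$ blocks so that every edge of $E'$ lies inside some block of some partition, then performs the (pre-computable, $O(1)$-step) in-block exchanges of each partition in parallel.
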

\begin{proof}A $3 \times 2$ rectangular grid may be viewed as a figure-8 
graph with $|V| = 6$ vertices. Applying Lemma~\ref{l:f8-feasible} to the 
$3 \times 2$ grid tells us that any two robots on such a graph can be 
exchanged without collision. Furthermore, all such exchanges can be 
pre-computed and performed in $O(1)$ (i.e., a constant number of) steps. 

To perform {\sc flip($E'$)} on an $m_{\ell} \times m_s$ grid $G$, we partition 
the grid into multiple disjoint $3 \times 2$ blocks. Using up to $4$ 
different such partitions, it is always possible to cover all edges 
of $G$. Therefore, the {\sc flip($E'$)} operation can be broken down 
into {\em parallel} two-robot exchanges on the $3\times 2$ blocks of 
these partitions. Because of the parallel nature of the two-robot 
exchanges, the overall {\sc flip($E'$)} operation can be completed $O(1)$ 
steps. As an example, Fig.~\ref{fig:constant-flip} illustrates how 
a {\sc flip($E'$)} operations can be carried out on a $7\times 5$ grid. 
Note that two partitions (the top two in Fig.~\ref{fig:constant-flip}) are 
sufficient to cover all edges below the second row (including the second row). 
Then, two more partitions (the bottom two in Fig.~\ref{fig:constant-flip}) can 
cover all edges above the second row. In the figure, the solid edges represent 
the edge set $E'$. After each partition starting from the top left one, 
two-robot exchanges can be performed which allow the removal of the edges 
covered by the partition, as shown in the subsequent picture. 
\begin{figure}[htp]
\begin{center}
\begin{overpic}[width=0.5\textwidth,tics=5]{./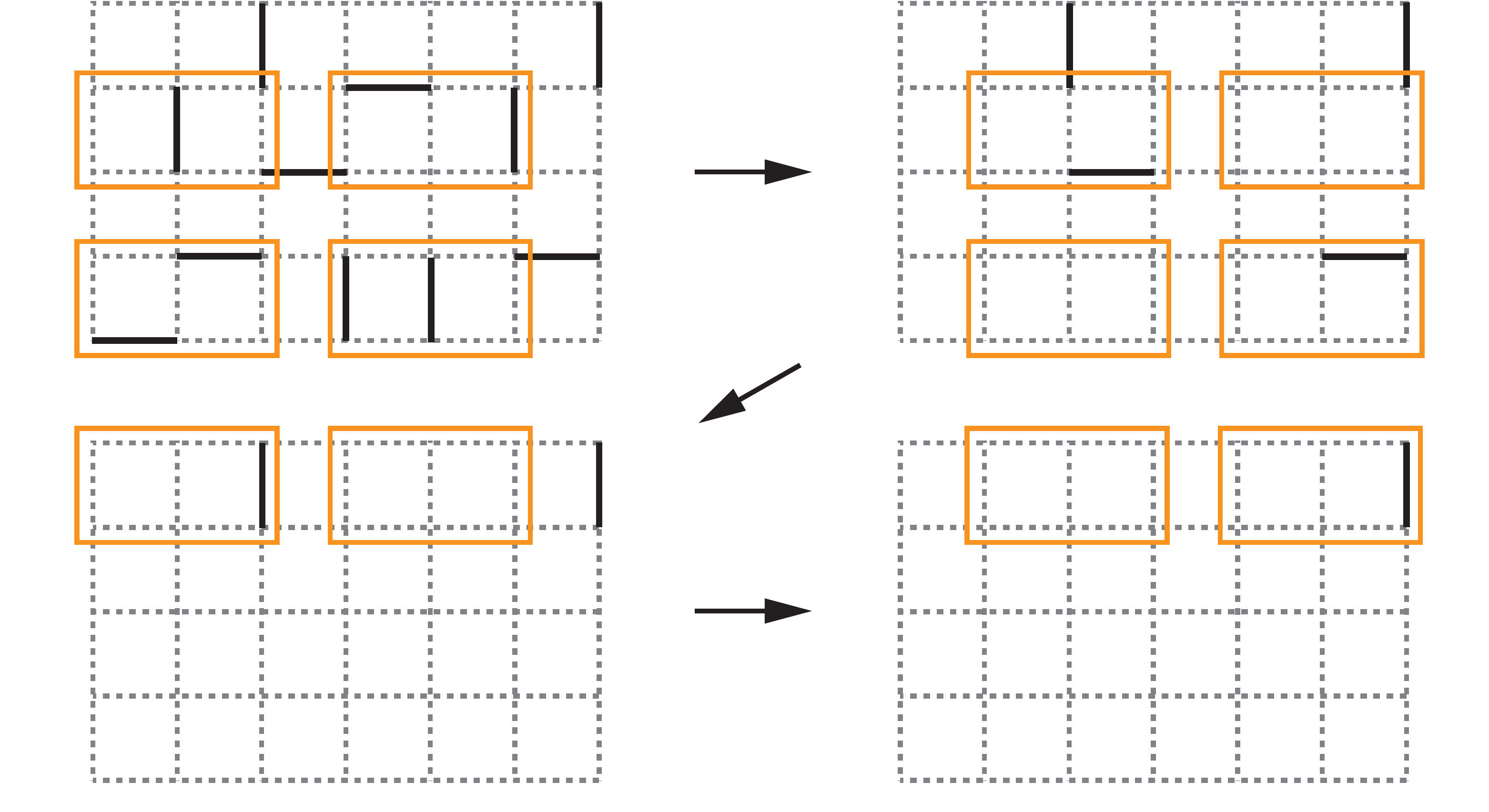}
\end{overpic}
\end{center}
\caption{\label{fig:constant-flip} Illustration of how the 
{\sc flip($E'$)} operation can be completed in a constant number of 
steps on an $m_{\ell} \times m_s$ grid, which requires up to 4 partitions of 
the grid into $3\times 2$ blocks.}  
\end{figure} ~\qed
\end{proof}
\subsection{Exchange of Groups Robots on an Embedded Tree}
Lemma~\ref{l:constant-flip}, in a nutshell, allows the concurrent exchange 
of adjacent robots to be performed in $O(1)$ steps. With 
Lemma~\ref{l:constant-flip}, to prove Theorem~\ref{t:grid-linear-makespan}, 
we are left to show that on an $m_{\ell} \times m_s$ grid, after splitting,
the group operation in the first \sag iteration can be decomposed into 
$O(m_{\ell})$ {\sc flip($\cdot$)} operations. Because each {\sc flip($\cdot$)} 
can be carried out in $O(1)$ steps, the overall makespan of the group 
operation is $O(m_{\ell})$. To obtain the desired decomposition, we need 
to maximize parallelism along the split line. We achieve the desired 
parallelism by partitioning the grid into trees with limited overlap. Each 
such tree has a limited diameter and crosses the split line. The group 
operation will then be carried out on these trees. Before detailing the 
tree-partitioning step, we show that grouping robots on trees can be done 
efficiently. We start by showing that we can effectively ``herd'' a group 
of robots to the end of a path.\footnote{We emphasize that the group 
operation and groups of robots are related but bear different meanings.}
Note that we do not require a robot in the group to go to a specific goal 
vertex; we do not distinguish robots within the group. 

\begin{lemma}\label{l:herd}
Let $P$ be a path of length $\ell$ embedded in a grid. An arbitrary group of 
up to $\lfloor \ell/2 \rfloor$ robots on $P$ can be relocated to one end of 
$P$ in $O(\ell)$ steps. Furthermore, the relocation may be performed using 
{\sc flip($\cdot$)} on $P$. 
\end{lemma}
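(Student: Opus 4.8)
The plan is to exploit Lemma~\ref{l:constant-flip}, which lets us perform simultaneous exchanges of disjoint adjacent pairs on a grid in $O(1)$ steps. Since the path $P$ is embedded in the grid, any set of vertex-disjoint edges of $P$ is also a set of vertex-disjoint edges of the grid, so a single {\sc flip($\cdot$)} applied to alternating edges of $P$ costs $O(1)$ time. The key observation is that we are not distinguishing robots within the group: we only need to collect up to $\lfloor \ell/2 \rfloor$ occupied ``tokens'' to one designated end of $P$, treating the remaining cells as empty. This reduces the problem to a one-dimensional token-sorting (or ``bubble-to-the-end'') problem on the line $P$, where each {\sc flip($\cdot$)} simultaneously swaps contents across a matching of edges.

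First I would formalize the line abstraction: label the vertices of $P$ as $u_0, u_1, \dots, u_\ell$ with $u_\ell$ the target end, and mark each vertex as occupied (a group robot) or empty (a non-group robot, which here plays the role of a ``hole''). A single {\sc flip($\cdot$)} can act on either the set of even-indexed edges $\{(u_0,u_1),(u_2,u_3),\dots\}$ or the odd-indexed edges $\{(u_1,u_2),(u_3,u_4),\dots\}$; each is vertex-disjoint and so admits a valid $O(1)$-step flip. The strategy is an alternating odd-even transposition sweep: on each flip we swap an occupied/empty adjacent pair whenever the occupied token is on the left (closer to $u_0$) and the empty slot is on its right, pushing tokens toward $u_\ell$. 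Alternating between the two matchings guarantees that no token stalls for more than one flip while it still has room to advance toward $u_\ell$.

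Next I would bound the number of flips. This is the classical odd-even transposition sort applied to a binary (occupied/empty) sequence, where the desired final arrangement has all occupied tokens packed against $u_\ell$. The number of parallel rounds needed to sort such a sequence is at most the length of the line, i.e. $O(\ell)$ rounds, since the rightmost-needed token can be displaced by at most $\ell$ positions and each round advances every ``out-of-place'' token by one step (this is where the requirement of at most $\lfloor \ell/2 \rfloor$ tokens is comfortably more than enough, guaranteeing there is always a hole available so every token can keep moving). Since each round is a single {\sc flip($\cdot$)} costing $O(1)$ steps by Lemma~\ref{l:constant-flip}, the total makespan is $O(\ell) \cdot O(1) = O(\ell)$, and by construction every move used is a {\sc flip($\cdot$)} on $P$, establishing the ``furthermore'' clause.

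The main obstacle I anticipate is the careful bookkeeping of the odd-even parity argument to prove that $O(\ell)$ rounds truly suffice without an occupied token ever being blocked indefinitely; in particular one must verify that alternating the two matchings prevents a token from oscillating or deadlocking against an adjacent occupied token, and that the $\lfloor \ell/2 \rfloor$ bound guarantees enough empty slots for continual progress. A clean way to sidestep a delicate case analysis is to invoke the standard $0$-$1$ principle / known convergence bound for odd-even transposition sort, which gives the $O(\ell)$ round count directly; the only adaptation is noting that our flips realize each parallel transposition step in $O(1)$ real time steps via Lemma~\ref{l:constant-flip}, and that we may simply leave an edge un-flipped when its transposition is not desired (a no-op), so the physical flip operation is always legal.
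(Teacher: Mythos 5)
Your proposal is correct, but it proves the lemma by a genuinely different argument than the paper. You reduce herding to odd--even transposition sort on a $0$-$1$ sequence (group robots as $1$s, all other robots as indistinguishable $0$s), realize each parallel round of compare-exchanges as one {\sc flip($\cdot$)} on a vertex-disjoint subset of the even- or odd-indexed edges of $P$, and then invoke the classical linear round bound (via the $0$-$1$ principle) to get $O(\ell)$ rounds, hence $O(\ell)$ steps. The paper instead argues per robot: it inducts on the distance $k$ of a group robot from the target end, showing each such robot reaches its packed position within $O(k)$ steps of the start, the key case being that a group robot can trail the group robot ahead of it with a gap of at most $2$ once that robot starts moving. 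Your route buys a cleaner, citable convergence bound and an explicit parallel schedule (which matching to flip each round), avoiding the paper's bespoke follow-the-leader bookkeeping; the paper's route buys two things in return: it is self-contained, and its inductive argument is explicitly reused later in the proof of Theorem~\ref{t:tree-shift}, where robots stream off side branches of a tree --- a setting where odd--even transposition sort, being defined for linear arrays, does not directly apply. Two minor notes: your remark that the $\lfloor \ell/2 \rfloor$ bound is needed to guarantee progress is not actually required (odd--even sort converges for any number of $1$s; the bound matters only downstream, in Lemma~\ref{l:line-shift}, where two groups must coexist on $P$), and your observation that un-flipped edges are legal no-ops is indeed sound since Lemma~\ref{l:constant-flip} applies to an arbitrary vertex-disjoint edge set $E'$, which you are free to choose per round.
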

\begin{proof}Because we are to do the relocation using parallel two-robot 
exchanges on disjoint edges based on the {\sc flip($\cdot$)} operation, 
without loss of generality, we may assume that the path is straight and 
we are to move the robots to the right end of the path. An example 
illustrating the scenario is given in Fig.~\ref{fig:herd}. For a robot 
in the group, let its initial location on the path be of distance $k$ from 
the right end. We inductively prove the claim that it takes $O(k)$ 
steps from the beginning of all moves to ``shift'' such a robot to its 
desired goal location. 
\begin{figure}[htp]
\begin{center}
    \includegraphics[width=0.5\textwidth]{./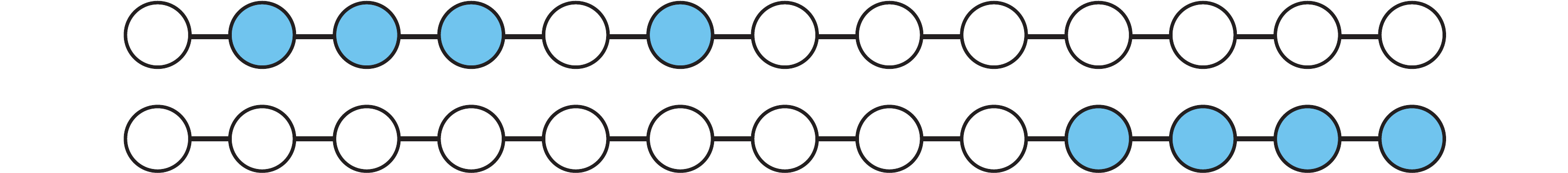} 
\end{center}
\caption{\label{fig:herd} The initial and goal configurations of 
a group of $4$ robots on a path, before and after a {\em herding} 
operation.}
\end{figure}

At the beginning (i.e., $t = 0$), let the robot on $P$ that is of 
distance $k$ to the right end be denoted as $r_k$. The hypothesis trivially 
holds for $k = 0$. Suppose it holds for $k-1$ and we need to show that 
the claim extends to $k$. If $r_k$ does not belong to the group of robots 
to be moved, then there is nothing to do. Otherwise, there are two cases. 

In the first case, robot $r_{k-1}$ does not belong to the group of robots to 
be moved. Then at $t=0$, $r_k$ and $r_{k-1}$ may be exchanged in $O(1)$ 
steps. Now $r_k$ is of distance $k - 1$ to the right and the inductive 
hypothesis yields that the rest of the moves for $r_k$ can be completed in 
$O(k - 1)$ steps. The total number of steps is then $O(k)$.

In the second case, robot $r_{k-1}$ also belongs to the group of robots to 
be moved. By the inductive hypothesis, $r_{k-1}$ can be moved to its desired
goal in $O(k-1)$ steps. However, once $r_{k-1}$ is moved to the right, 
it will allow $r_k$ to follow it with a gap between them of at most $2$. 
Once $r_{k-1}$ reaches its goal, $r_k$, whose goal is on the right of 
$r_{k-1}$, can reach its goal in $O(1)$ additional steps. The total number
of steps from the beginning is again $O(k)$. 

It is clear that all operations can be performed using {\sc flip($\cdot$)} 
on edges of $P$ when embedded in a grid. ~\qed
\end{proof}

Using the herding operation, the locations of two disjoint groups of robots,
equal in number, can also be exchanged efficiently. 

\begin{lemma}\label{l:line-shift}
Let $P$ be a path of length $\ell$ embedded in a grid. Let two groups, equal 
in number, reside on two segments of $P$ that do not intersect. Then positions 
of the two groups of robots may be exchanged in $O(\ell)$ steps without 
net movements of other robots. The relocation may be performed 
using {\sc flip($\cdot$)} on $P$. 
\end{lemma}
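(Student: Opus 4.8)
The plan is to reduce everything to a single sub-path and then invoke the herding lemma (Lemma~\ref{l:herd}) exactly twice. First I would restrict attention to the minimal sub-path $Q \subseteq P$ that contains both segments, oriented so that one group, call it $A$, occupies the left block and the other, call it $B$, occupies the right block; write $k = |A| = |B|$ and let $M$ denote the set of $m \ge 0$ robots lying strictly between the two blocks on $Q$. Every robot of $P$ outside $Q$ is never touched, so it trivially has no net movement, and since $Q \subseteq P$ every {\sc flip($\cdot$)} I use on an edge of $Q$ is a {\sc flip($\cdot$)} on an edge of $P$. Thus it suffices to realize the block exchange $[A\,|\,M\,|\,B] \to [B\,|\,M\,|\,A]$ on $Q$ while returning each robot of $M$ to its original cell.

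For the main case $m \ge 1$, I would first herd $A$ to the right end of $Q$ and then herd $B$ to the left end of $Q$, applying Lemma~\ref{l:herd} each time. The structural fact I rely on is that herding only exchanges a selected group robot with an adjacent non-selected one, so the non-selected robots never pass one another: their left-to-right relative order is preserved while they are compressed away from the target end. Consequently, after herding $A$ right, the non-selected robots $M$ and $B$ slide into the left $m+k$ cells of $Q$ in the order $M$ then $B$, giving $[M\,|\,B\,|\,A]$ with $A$ now occupying exactly the $k$ cells that $B$ originally held (here $|A|=|B|$ is essential). Herding $B$ left on $Q$ then packs $B$ into the leftmost $k$ cells and lets the remaining robots $M,A$ fill the rest in order, yielding $[B\,|\,M\,|\,A]$. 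Because the block lengths match, $M$ is returned to precisely its original span, and order preservation puts each middle robot back in its own cell; the net effect is therefore the desired exchange with $M$ pointwise fixed. Each herd runs on a path of $2k+m$ vertices, so it costs $O(|Q|)=O(\ell)$ steps, for a total of $O(\ell)$.

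I expect the main obstacle to be discharging the hypothesis of Lemma~\ref{l:herd}, which requires the herded group to have at most $\lfloor (\text{path length})/2 \rfloor$ members. On $Q$ this reads $k \le \lfloor (2k+m-1)/2 \rfloor$, which holds precisely when $m \ge 1$; this is exactly why I perform \emph{both} herds on the full sub-path $Q$ rather than on a shortened remainder, where the bound would fail. The genuinely tight situation is $m = 0$, i.e.\ when the two segments are adjacent with no robot between them, so that $Q$ is a $2k$-vertex path carrying only $A$ and $B$ and the herding bound just barely fails. I would dispatch this boundary case directly: swapping two equal adjacent blocks on a $2k$-vertex path (with internal order irrelevant) is the task of turning the pattern $A^kB^k$ into $B^kA^k$, which a symmetric wave of concurrent disjoint {\sc flip($\cdot$)} operations accomplishes in $O(k)=O(\ell)$ steps, and since no other robot lies between the blocks nothing needs to be restored. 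Verifying the order-preservation property of herding and checking the pointwise restoration of $M$ are the two places where care is needed; the remainder is bookkeeping.
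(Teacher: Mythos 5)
Your construction silently assumes that each group occupies its segment as a \emph{contiguous block}: you set up $Q$ as $[A\,|\,M\,|\,B]$ with $M$ ``strictly between the two blocks.'' But the lemma (and the way the paper uses it) covers groups that sit at \emph{arbitrary} positions within their two segments, interleaved with robots belonging to neither group --- note that Lemma~\ref{l:herd} explicitly takes an ``arbitrary group'' of robots on a path, Fig.~\ref{fig:line-move} shows unshaded robots interspersed with the shaded ones, and the invocations of this lemma inside the proof of Theorem~\ref{t:tree-shift} (and in the final routing step) involve scattered, not contiguous, groups. In the scattered case your two herds fail to prove the conclusion: after herding $A$ to the right end of $Q$ and $B$ to the left end, you reach $[B\text{-block}\,|\,\text{non-group robots in order}\,|\,A\text{-block}]$, whereas the goal requires $B$ to occupy exactly the (possibly non-contiguous) cell set vacated by $A$, and vice versa, so that every non-group robot --- including those originally interleaved \emph{inside} the group spans --- returns pointwise to its own cell. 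Concretely, if the left segment holds $A, u, A$ with $u$ an unshaded robot, your final configuration leaves $u$ outside its original cell, i.e., $u$ has net movement, violating the lemma. (Since all vertices are occupied, ``no net movement of others'' forces $B$ onto precisely $A$'s original cells; a contiguous block at the end of $Q$ is the wrong target whenever $A$ was scattered.)

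The missing idea is the paper's third step: after compacting the two groups to the two ends (the paper does this on all of $P$, with the implicit assumption that each group has at most $\lfloor \ell/2\rfloor$ robots), it performs two parallel \emph{reversed} herding operations, defined by applying Lemma~\ref{l:herd} to the \emph{goal} configuration and running those moves backwards; this scatters each group from its end-block into the exact target cells, and order preservation of non-group robots (which the paper also relies on) then gives pointwise restoration. What you actually proved is the special block case, and there your argument is genuinely tidy --- working on the minimal subpath $Q$ lets $|A|=|B|$ place the ends of $Q$ exactly at the target blocks, saving the third herd, and your explicit discharge of the hypothesis $k \le \lfloor(2k+m-1)/2\rfloor$ together with the $m=0$ transposition wave handles a boundary case the paper simply assumes away. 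But as written the proposal does not establish the lemma in the generality the paper needs; appending a reversed-herding phase (or restating the two herds so their reversal from $X_G$ produces the scattered targets) would close the gap.
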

\begin{proof}We may again assume that $P$ is straight. An implicit assumption 
is that each group contains at most $\lfloor \ell/2 \rfloor$ robots.
Fig.~\ref{fig:line-move} illustrates an example in which two groups of $4$ 
robots each need to switch locations on such a path. 
\begin{figure}[htp]
\begin{center}
    \includegraphics[width=0.5\textwidth]{./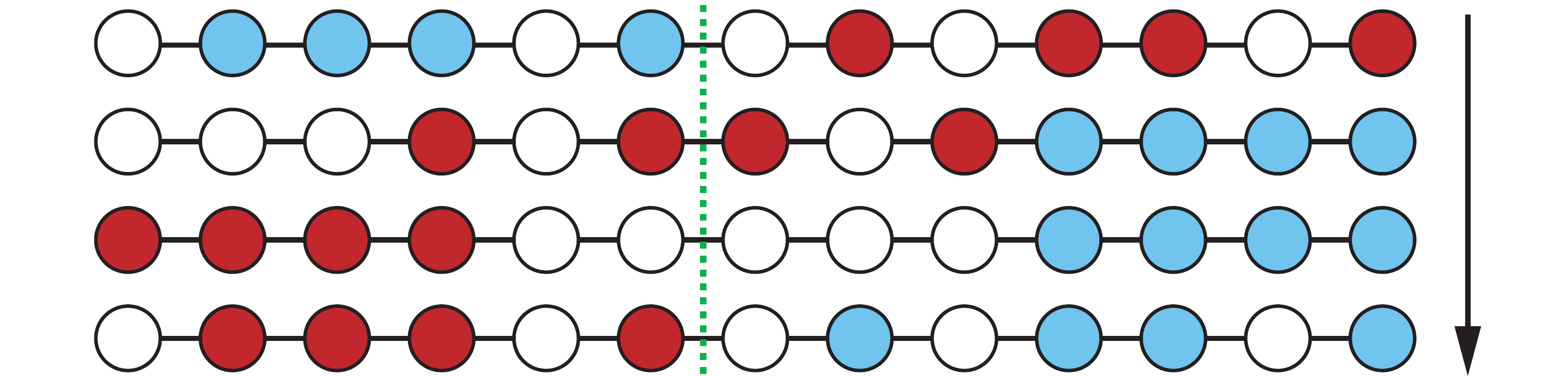} 
\end{center}
\caption{\label{fig:line-move} The initial (first row), goal (last row) 
and intermediate configurations of two groups of $4$ robots to be 
exchanged. Each group is marked with a different color/shade. The 
unshaded discs do not belong to either of the two group.}
\end{figure}

To do the grouping, we first apply a herding operation that moves one group 
of robots to one end of $P$. In Fig.~\ref{fig:line-move}, this is done to 
the group of lightly-shaded robots to move them to the right side (the 
second row of Fig.~\ref{fig:line-move}). Then, another herding operation is 
performed to move the other group to the other end of $P$ (the third row 
of Fig.~\ref{fig:line-move}). In the third and last step, two parallel 
``reversed'' herding operations are carried out on two disjoint segments 
of $P$ to move them to their desired goal locations. This is best understood
by viewing the process as applying the herding operation to the goal 
configuration. As an example, in Fig.~\ref{fig:line-move}, from the goal
configuration (last row), we may readily apply two herding operations to move 
two groups of robots to the two ends of $P$ as shown in the third row of 
the figure. Because each herding operation takes $O(\ell)$ steps, the 
overall operation takes $O(\ell)$ steps as well. It is clear that 
in the end, a robot not in the two groups will not have any net movement 
on $P$ because the relative orders of these robots (unshaded ones in 
Fig.~\ref{fig:line-move}) never change. ~\qed
\end{proof}

Next, we generalize Lemma~\ref{l:line-shift} to a tree embedded in a grid. 
On a tree graph $T$, we call a subgraph a {\em path branch} of $T$ if the 
subgraph is a path with no other attached branches. That is, all vertices 
of the subgraph have degrees one or two in $T$. 

\begin{theorem}\label{t:tree-shift}
Let $T$ be a tree of diameter $d$ embedded in a grid. Let $P$ be a length 
$\ell$ path branch of $T$. Then, a group of robots on $P$ can be exchanged 
with robots on $T$ outside $P$ in $O(d)$ steps without net movement of other 
robots. The relocation may be performed using {\sc flip($\cdot$)} on $T$. 
\end{theorem}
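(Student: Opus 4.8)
The plan is to reduce the tree exchange to the path exchange of Lemma~\ref{l:line-shift} by first collapsing everything of interest onto a single long path of $T$. Since $P$ is a path branch, it is pendant: one endpoint $u$ of $P$ is a leaf of $T$, while its other endpoint attaches to $T\setminus P$. Consequently, any path of $T$ emanating from $u$ must traverse all of $P$ before continuing into $T\setminus P$; let $Q$ be a longest such path, so that $Q = P \cup Q'$ with $Q'\subseteq T\setminus P$, and $|Q| \le d$ by the diameter bound. If the out-group $B$ (the robots outside $P$ to be exchanged, equal in number to the in-group $A$ on $P$) happened to lie entirely on $Q'$, we would be done immediately: $A$ and $B$ then occupy two non-intersecting segments of the single path $Q$, and Lemma~\ref{l:line-shift} exchanges them in $O(|Q|) = O(d)$ steps using {\sc flip($\cdot$)}, leaving all other robots with no net movement.

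The work, therefore, is to transport $B$ onto $Q'$ and later undo this transport. First I would decompose $T\setminus Q$ into the maximal subtrees hanging off the interior vertices of $Q'$; because $T$ is a tree, these subtrees are pairwise vertex-disjoint and each attaches to $Q'$ at a single vertex. For each such subtree I would route its $B$-robots along a root-to-attachment path toward $Q'$ using the herding operation of Lemma~\ref{l:herd}, and then slide them onto $Q'$. The crucial observation enabling the $O(d)$ bound is that all of these branch-herding operations can be executed \emph{in parallel}: distinct subtrees share no vertices, so their {\sc flip($\cdot$)} edge sets are disjoint and may be batched, and every branch has length at most $d$, so the consolidation completes in $O(d)$ steps rather than in time proportional to the number of branches.

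Once $B$ has been gathered onto $Q'$, both groups lie on $Q$ and Lemma~\ref{l:line-shift} performs the exchange in $O(d)$ steps. Finally I would reverse the consolidation --- replaying the branch-herding flips backward --- so that the robots now occupying the formerly-$B$ positions on $Q'$ are pushed back out into the branches to land on $B$'s original vertices, and every robot that was merely displaced during gathering returns to its start. This restores the no-net-movement guarantee for all robots outside $A\cup B$ and again costs $O(d)$. Summing the phases (consolidate, line-shift, de-consolidate) gives a total makespan of $O(d)$, and every sub-step is realized by {\sc flip($\cdot$)} on edges of $T$, as required.

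The main obstacle I anticipate is making the parallel branch-consolidation rigorous: I must argue that the herding flows from several branches onto the shared path $Q'$ do not collide and that $Q'$ has enough slack to absorb the incoming robots while the outgoing segment is being cleared. I expect to handle this by scheduling the gathering so that, at the moment a branch's robots step onto $Q'$, the corresponding stretch of $Q'$ has already been vacated toward the $P$-end, using the diameter bound to guarantee that the combined occupancy never exceeds the capacity of $Q$. An alternative, should the bookkeeping become unwieldy, is to induct on the number of degree-$\ge 3$ vertices of $T$, peeling off one branch at a time and charging each peel to a distinct portion of the diameter so that the costs telescope to $O(d)$.
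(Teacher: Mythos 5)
There is a genuine gap, and you have correctly located it yourself: the ``gather all of $B$ onto $Q'$, then run a single line-shift'' plan is not feasible in general, and the scheduling fix you defer to is not bookkeeping --- it is the core of the proof. Since every vertex of the grid is occupied, $Q'$ has no spare room, and more fundamentally $Q'$ may simply be too short to hold $B$: with $\ell + |Q'| \le d$, you can have $|B| = |A| = \Theta(\ell)$ robots distributed over several side branches whose total vertex count far exceeds $|Q'|$ (e.g., two branches of length $5$ hanging off the first interior vertex of a $Q'$ of length $6$ --- this respects both the diameter bound and the maximality of $Q$, yet $|B| = 10 > |Q'|$). So the intermediate configuration your middle phase requires does not exist. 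Your fallback --- peeling one branch at a time --- does not rescue the bound either: there can be $\Theta(d)$ robots in $B$, each needing to travel $\Theta(d)$ along the shared path $Q'$, and sequential peels cannot be parallelized on that shared path, so without further argument the cost is $O(d^2)$, not a telescoping $O(d)$; the claimed ``charging each peel to a distinct portion of the diameter'' is unsubstantiated precisely because all peels contend for the same portion of $Q'$.

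The paper's proof resolves exactly this point by pipelining rather than consolidating. It first exchanges, via Lemma~\ref{l:line-shift}, the $g^2$ robots already on the main path; then it \emph{stages} the branch-resident $g^2$ robots so each sits just off the main path (an $O(d)$ parallel move within disjoint branches, as in your proposal), assigns them priorities by closeness to $P$, and streams them out \emph{one immediately after the other}, reusing the inductive argument of Lemma~\ref{l:herd}: a convoy of $O(d)$ robots each traveling $O(d)$ costs $O(d + \text{number of robots}) = O(d)$, not the product. Crucially, the incoming $g^1$ robots fill the vacated branch slots in the reverse of the exit order, so no main-path capacity is ever needed, and the final step reverses only the staging (a conjugation confined to within each branch, which is capacity-safe), restoring all bystander robots. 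Your conjugation skeleton (consolidate, exchange, de-consolidate) and your reverse-replay argument for the no-net-movement guarantee are sound ideas that also appear in the paper's structure, but the exchange and the branch-emptying must be interleaved, not staged as separate global phases; until that streaming schedule is made explicit, the $O(d)$ claim does not go through.
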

\begin{proof}
We temporarily limit the tree $T$ such that, after picking a proper {\em main 
path} that contains $P$ and deleting this main path, there are only paths left. 
That is, we assume all vertices with degree three or four are on a single 
path containing $P$. An example of such a tree $T$ and the exchange problem 
is given in the top row of Fig.~\ref{fig:tree-shift}. In the figure, the main
path is the long horizontal path and $P$ is the path on the left of the dotted 
split line. We call other paths off the main path {\em side branches}. 
Once this version is proven, the general version readily follows because all 
possible tree structures are considered in this special example, i.e., there 
may be either one or two branches coming out of a node on the main branch. The rest 
of the paper will only use the less general version. For ease 
of reference, for the two groups of robots, we denote the group fully on $P$ 
as $g^1$ and the other group as $g^2$. In the example, $g^1$ has a light 
shade and $g^2$ has a darker shade.
\begin{figure}[htp]
\begin{center}
\begin{overpic}[width=0.5\textwidth,tics=5]{./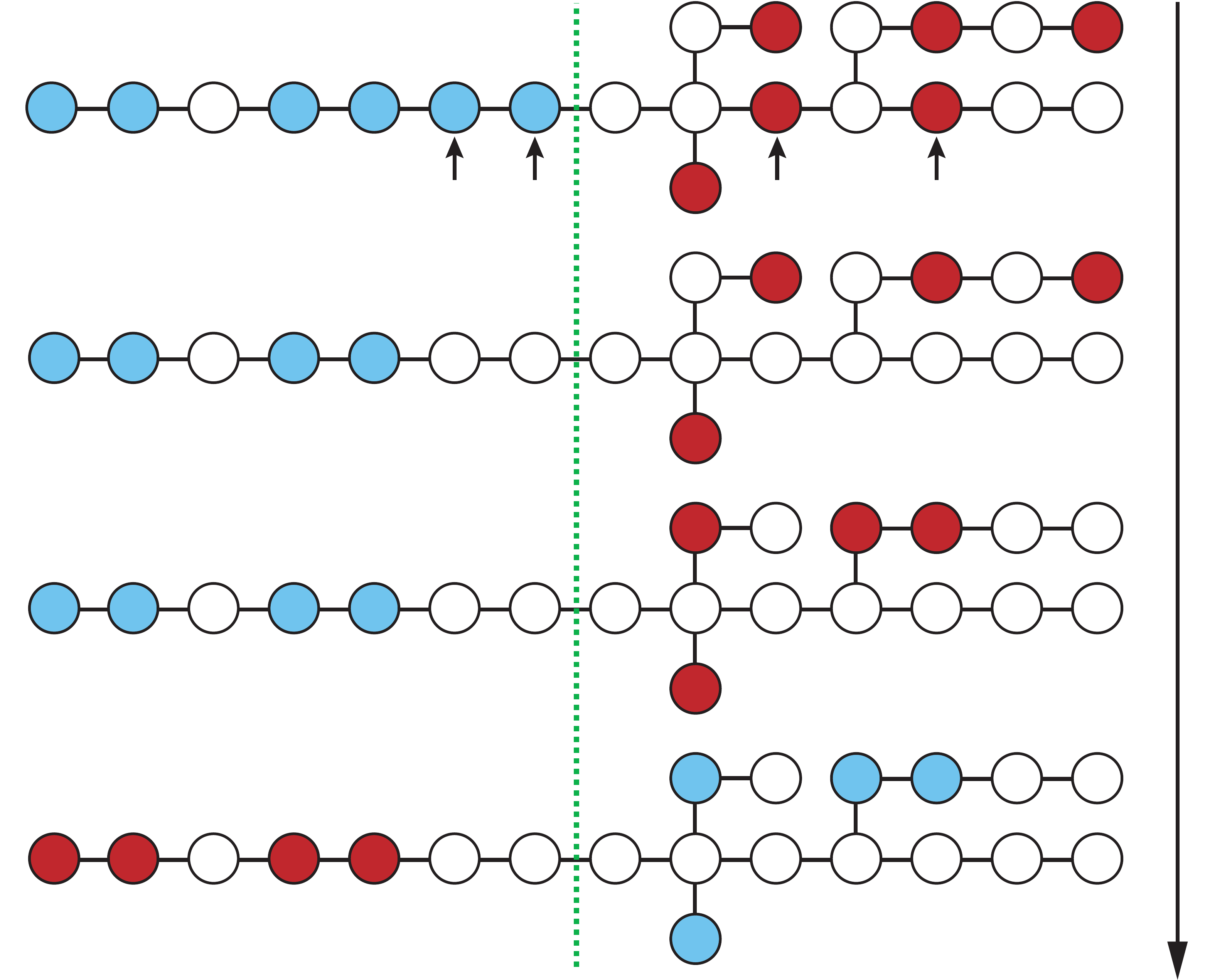}
\put(51, 23) {{\small $r^2_1$}}
\put(51, 36) {{\small $r^2_2$}}
\put(69.5, 42) {{\small $r^2_3$}}
\put(76.3, 42) {{\small $r^2_4$}}
\end{overpic}
\end{center}
\caption{\label{fig:tree-shift} The initial (first row) and three 
intermediate configurations in solving the problem of switching the 
location of these robots on a tree.}
\end{figure}

To start, we first solve part of the relocation problem on the main path, 
which can be done in $O(d)$ steps by Lemma~\ref{l:line-shift}. After 
the step, the robots involved in the first step are no longer relevant. In 
the example, this is to exchange the robots marked with small arrows in the 
first row of Fig.~\ref{fig:tree-shift}. After the relocation of these robots 
is completed, we remove their shades.

In the second step, the relevant robots in $g^2$ on the side branch are 
moved so that they are just off the main path. We also assign priorities to 
these robots based on their closeness to $P$ and break ties randomly. For 
a robot labeled $i$ in a group $g^j$, we denote the robot as $r_i^j$. For 
our example, this current step yields the third row of 
Fig.~\ref{fig:tree-shift} with the priorities marked. Since the moves 
are done in parallel and each branch is of length at most $d$, only 
$O(d)$ steps are needed. 

In the third step, robots from $g^2$ will move out of the side branches 
in the order given, one immediately after the other (when possible). For 
the example (third row of Fig.~\ref{fig:tree-shift}), $r_1^2$ will move 
first. $r_2^2$ will follow. Then $r_3^2$, followed by $r_4^2$. Using the 
same inductive argument from the proof of Lemma~\ref{l:herd}, we observe 
that all robots from $g^2$ on the side branch can be moved off the side 
branches (and reach their goals on the main path) in $O(d)$ time. As the 
relevant robots from $g^1$ also move across the split line, they will 
fill in side branches in opposite order to when the robots from $g_2$ 
are moved out of the branches. In the example, this means that the 
branch where $r_3^2$ and $r_4^2$ were on will be populated with robots from
$g^1$ first, followed by the branch where $r_2^2$ was, and finally the 
branch where $r_1^2$ was. This ensures that at the end of this step, any robot
not in $g^1$ and $g^2$ will have no net movement. The number of steps
for this is again $O(d)$. 

In the last step, we simply reverse the second step, which takes another 
$O(d)$ steps. Putting everything together, $O(d)$ steps 
are sufficient for completing the task. 

Combining all steps, only $O(d)$ steps are required to complete
the desired exchange. To see that the same conclusion holds for more general
trees with side branches that are not simple paths, we simply need to do the 
second step and third step more carefully. But, because we are only moving 
at most $O(d)$ robots, using an amortization argument, it is straightforward
to see that the $O(d)$ bound does not change. ~\qed
\end{proof}

We note that many of the operations used to prove Lemma~\ref{l:herd}, 
Lemma~\ref{l:line-shift}, and Theorem~\ref{t:tree-shift} can be combined 
without changing the outcome. However, doing so will make the proofs less 
modular. Given the focus of the current paper which is to construct a 
polynomial time algorithm with constant factor optimality guarantee, we opt 
for clarity instead of pursuing a smaller asymptotic constant. 

\subsection{Tree Forming and Robot Routing}
We proceed to prove Theorem~\ref{t:grid-linear-makespan} by showing in detail 
how to carry out a single iteration of \sag, which boils down to partitioning
the robot exchanges into robot exchanges on trees, to which 
Theorem~\ref{t:tree-shift} can then be applied. The proof itself can be 
subdivided into three steps:
\begin{itemize}
\item \textbf{Splitting and initial tree forming}, where a grid is partitioned
into two roughly equal halves and trees are initial formed across the partition
line for facilitating exchanging of two groups of robots.  
\item \textbf{Tree post-processing}, which addresses the issue where two 
initial trees might have ``$+$'' like crossovers. 
\item \textbf{Final robot routing}, which actually carries through the 
robot routing process and resolve some final issues.  
\end{itemize}

\begin{proof}[Proof of Theorem~\ref{t:grid-linear-makespan}]
\textbf{Splitting and initial tree forming}. In a split, we always split 
along the longer side of the current grid. 
Since $m_{\ell} \ge m_s$, the $m_{\ell} \times m_s$ grid is split into two 
grids of dimensions $\lceil m_{\ell}/2\rceil 
\times m_s$ and $\lfloor m_{\ell}/2\rfloor \times m_s$, respectively. 
For convenience, we denote the two split grids as $G_1$ and $G_2$, 
respectively. Recall that in the group operation, we want to exchange 
robots so that a robot with goal in $G_1$ (resp., $G_2$) resides 
in $G_1$ (resp., $G_2$) at the end of the operation. To do this 
efficiently, we need to maximize the parallelism. This is achieved 
through the computation of a set of $m_s$ trees with which we can apply 
Theorem~\ref{t:tree-shift}. We will use the example from 
Fig.~\ref{fig:split-grid} to facilitate the higher level explanation. 
\begin{figure}[htp]
\begin{center}
\begin{overpic}[width=0.5\textwidth,tics=10]{./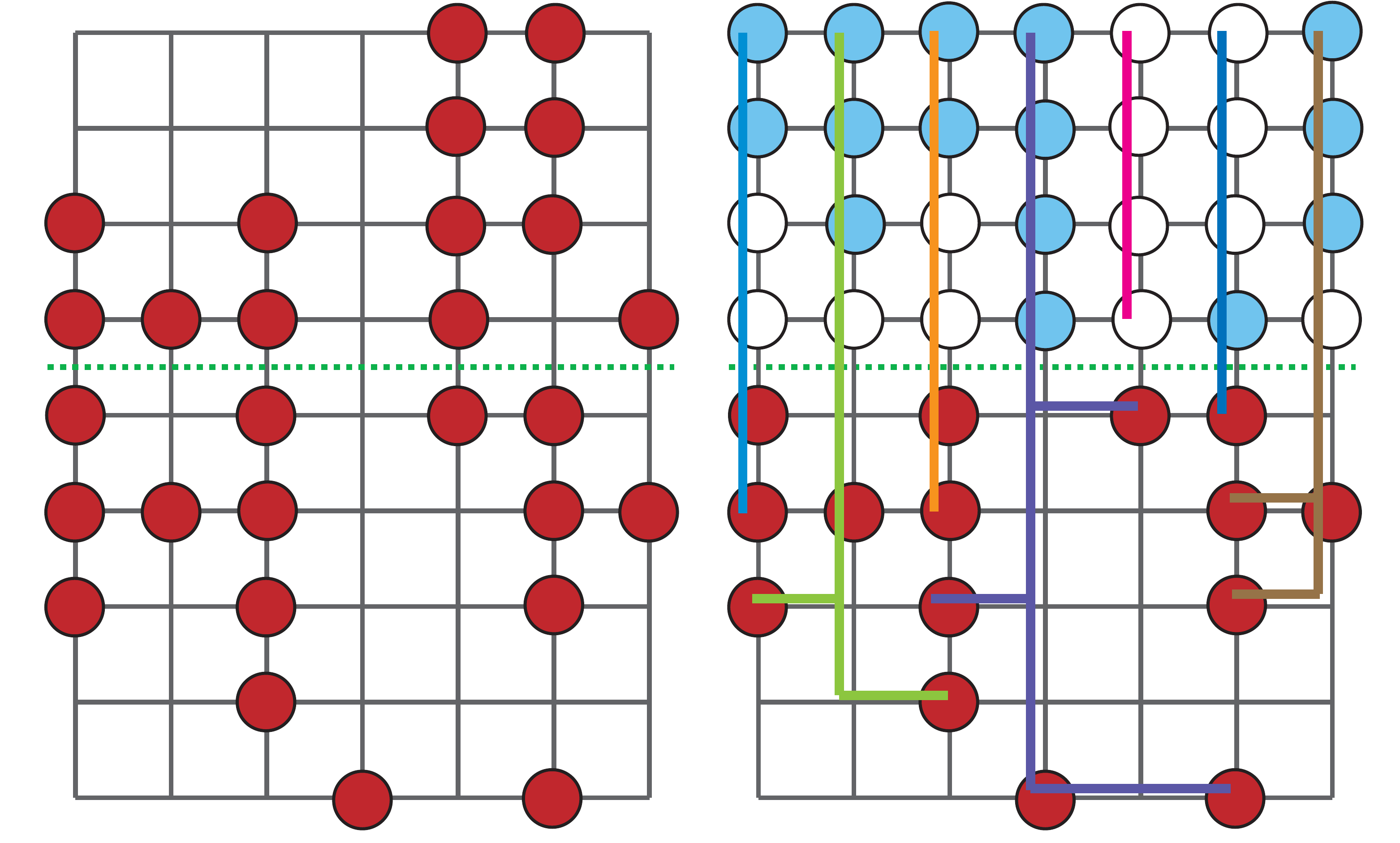} 
\put(-1, 50){{\small $G_2$}}
\put(-1, 8){{\small $G_1$}}
\put(23, -4){{\small (a)}}
\put(72.5, -4){{\small (b)}}
\end{overpic}
\end{center}
\caption{\label{fig:split-grid} (a) A $9\times 7$ grid is split into two
grids $G_1$ and $G_2$ of sizes $4 \times 7$ and $5 \times 7$, respectively. 
The dark-shaded robots' final goals are in $G_2$. (b) The grid is partitioned 
into (possibly non-disjoint) trees 
to allow the dark-shaded robots that are not already in $G_2$ to exchange 
with robots (lightly-shaded ones) that should be moved to $G_1$.} 
\end{figure}

Assume that the grid is oriented so that $m_s$ is the number of columns 
and $m_{\ell}$ is the number of rows (see Fig.~\ref{fig:split-grid}). The 
trees that will be built will be based on the columns of one of the 
split graphs, say $G_2$. A column $i$ of $G_2$ is a path of length 
$\lfloor m_{\ell}/2\rfloor - 1$ with $\lfloor m_{\ell}/2\rfloor$ robots on it. 
Suppose $k_i$ of these robots have goals outside $G_2$ (the lightly-shaded 
ones in Fig.~\ref{fig:split-grid} (b)), then it is always possible to find 
$k_i$ robots (the dark-shaded ones in Fig.~\ref{fig:split-grid}(b)) on $G_1$ 
that must go to $G_2$. A tree $T_i$ is built to allow the exchange of 
these $2k_i$ robots such that the part of $T_i$ in $G_2$ is simply column $i$.
That is, the $m_s$ trees to be built do not overlap in $G_2$. 

For a column $i$ in $G_2$ with $k_i$ robots to be moved to $G_1$, it is not 
always possible to find exactly $k_i$ robots on column $i$ of $G_1$. This
makes the construction of the trees in $G_1$ more complex. The construction 
is done in two steps. In the first step, robots to be moved to $G_2$ are grouped in a distance
optimal manner, which induces a preliminary tree structure. Focusing on 
$G_1$, we know the number of robots that must be moved across the split line 
in each column (see Fig.~\ref{fig:assign-move}). For each robot to be moved 
across the split line, the distance between the robot and all the possible 
exits of $G_1$ is readily computed. Once these distances are computed, 
a standard matching procedure can be run to assign each robot an exit point 
that minimizes the total distance traveled by these robots \cite{Kuh55, SolYu15}. 
The assignment has a powerful property that we will use later. For each robot, 
either a straight or an $L$ shaped path can be obtained based on the assignment.
Merging these paths for robots exiting from the same column then yields 
a tree for each column (see Fig.~\ref{fig:split-grid}(b)). Note that each
tree has a single vertical segment. 

\begin{figure}[htp]
\begin{center}
\begin{overpic}[width=0.5\textwidth,tics=10]{./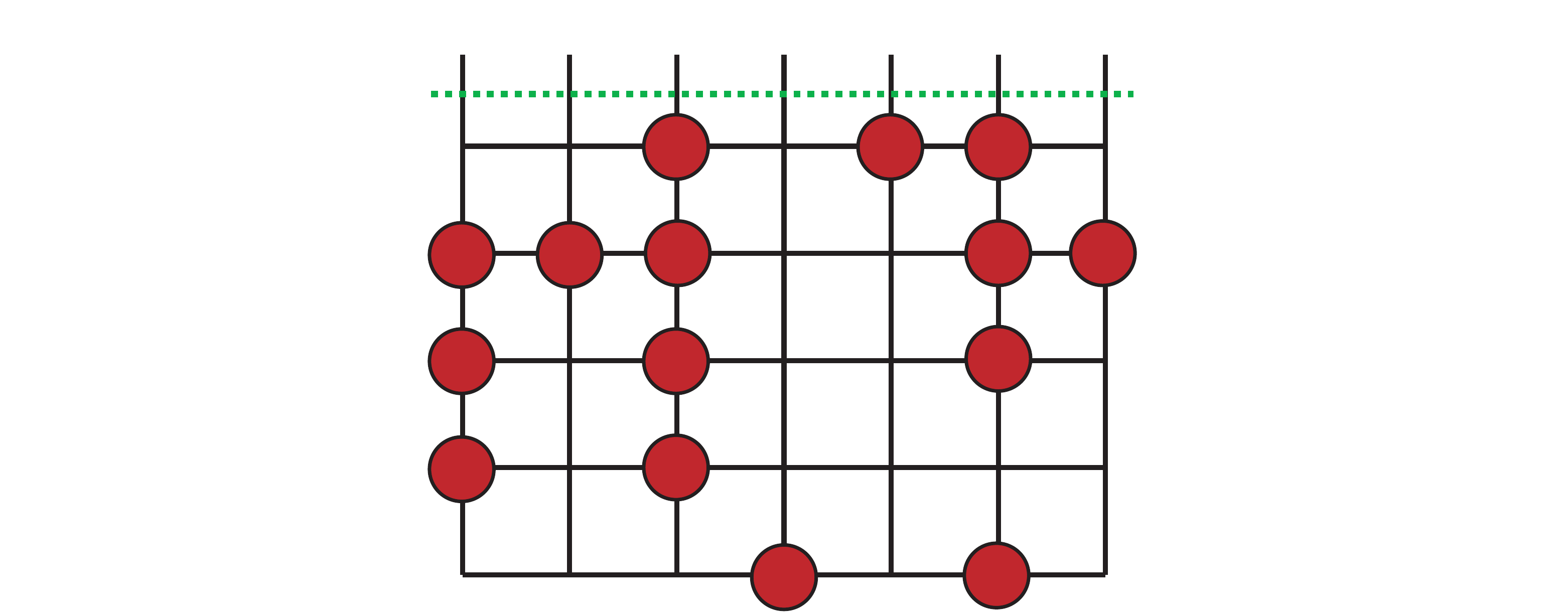}
\put(28.4, 38){{\small $2$}}
\put(35.3, 38){{\small $3$}}
\put(42, 38){{\small $2$}}
\put(48.8, 38){{\small $4$}}
\put(55.6, 38){{\small $0$}}
\put(62.6, 38){{\small $1$}}
\put(69.2, 38){{\small $3$}}
\end{overpic}
\end{center}
\caption{\label{fig:assign-move} For the example given in 
Fig.~\ref{fig:split-grid}, this figure highlights $G_1$, the robots that 
must be moved to $G_2$, and how many robot need to be moved through the 
top of $G_1$ along each column. Regarding distance, the bottom left robot
needs to travel $4$ edges to exit $G_1$ through the left most column. It 
needs to travel $4 + 7 - 1 = 10$ edges to exit from the right most 
column.}  
\end{figure}

\noindent\textbf{Tree post-processing}. 
In the second step, the trees are post-processed to remove crossings 
between them. Example of such a crossing we refer to is illustrated in 
Fig.~\ref{fig:crossover-removal}(a) (dotted lines). Formally, we say 
two trees $T_1$ and $T_2$ has a {\em crossover} if a horizontal path of $T_1$
intersects with a vertical path of $T_2$, with the additional requirement 
that one of the involved horizontal path from one tree forms a $+$ with the 
vertical segment of the other tree. For example, Fig.~\ref{fig:crossover-removal}(b)
is not considered a crossover. 

\begin{figure}[htp]
\begin{center}
\begin{overpic}[width=0.5\textwidth,tics=10]{./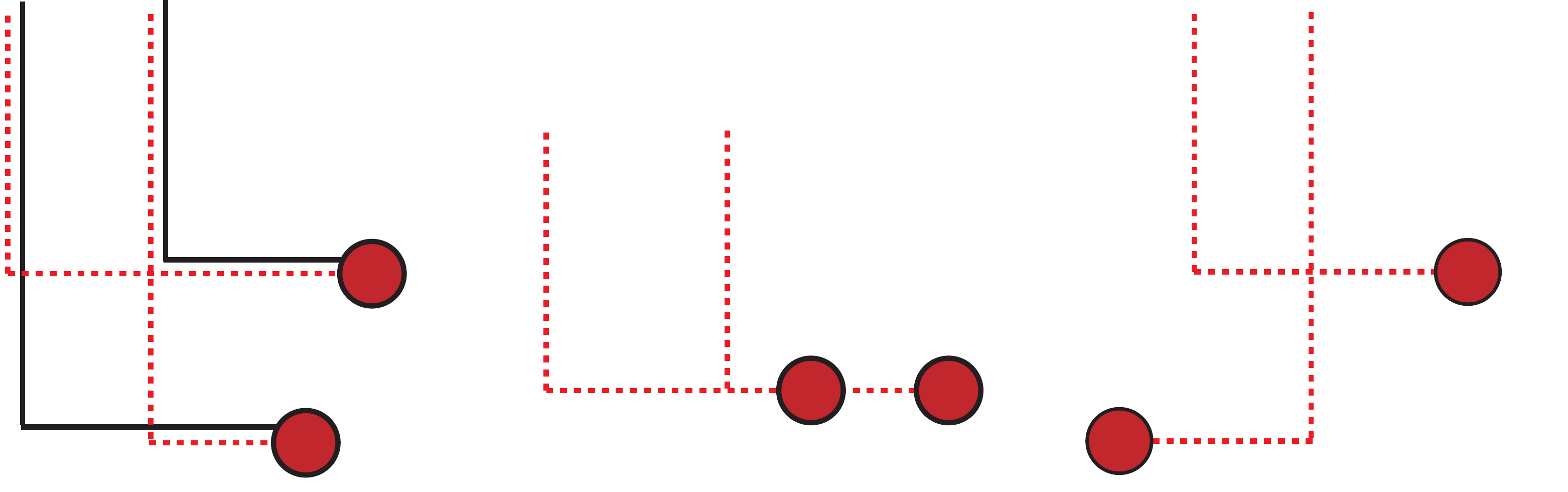}
\put(8, -4){{\small (a)}}
\put(46, -4){{\small (b)}}
\put(82, -4){{\small (c)}}
\end{overpic}
\end{center}
\caption{\label{fig:crossover-removal} (a) Example of a tree crossover 
(dotted paths) and its removal (solid paths) without increasing the total 
distance. Note that only the relevant paths of the two trees are shown. 
(b) An intersection that is not considered a crossover. (c) An impossible 
crossover scenario.}  
\end{figure}

For each crossover, we update the two trees to remove the crossover, as 
illustrated in Fig.~\ref{fig:crossover-removal}(a). The removal will not 
change the total distance traveled by the two (or more) affected robots
but will change the path for these robots. To see that the process will
end, note that one of the two involved paths is shortened. Since there 
are finite number of such paths and each path can only be shortened a 
finite number of times, the crossover removal process can get rid of 
all crossovers. We will show later this can be done in polynomial 
computation time when we perform algorithm analysis. We note here that
the crossover scenario in Fig.~\ref{fig:crossover-removal}(c) cannot 
happen because a removal would shorten the overall length, which contradicts
the assumption that these paths have the shortest total distance. 

\noindent\textbf{Final robot routing}.
At the end of the crossover removal process, we may first route all robots on 
a tree branch that do not have overlaps with other trees. However, this does
not route all robots because it is possible for the tree structures for 
different columns to overlap horizontally (see Fig.~\ref{fig:artifact}). For 
two trees that partially overlap with each other (e.g., the left and middle two 
trees in Fig.~\ref{fig:artifact}), one of the trees does not extend lower (row wise) 
than the row where the overlap occurs. Otherwise, this yields a crossover, 
which should have already been removed. For two overlapping trees
$T_1$ and $T_2$, we say $T_1$ is a {\em follower} of $T_2$ if a robot going
to $T_2$ must pass through the vertical path of $T_1$. In the example from 
Fig.~\ref{fig:artifact}, $T_1$ is a follower of $T_2$. Similarly, the 
right (green) tree is a follower of $T_1$.
\begin{figure}[htp]
\begin{center}
\begin{overpic}[width=0.45\textwidth,tics=10]{./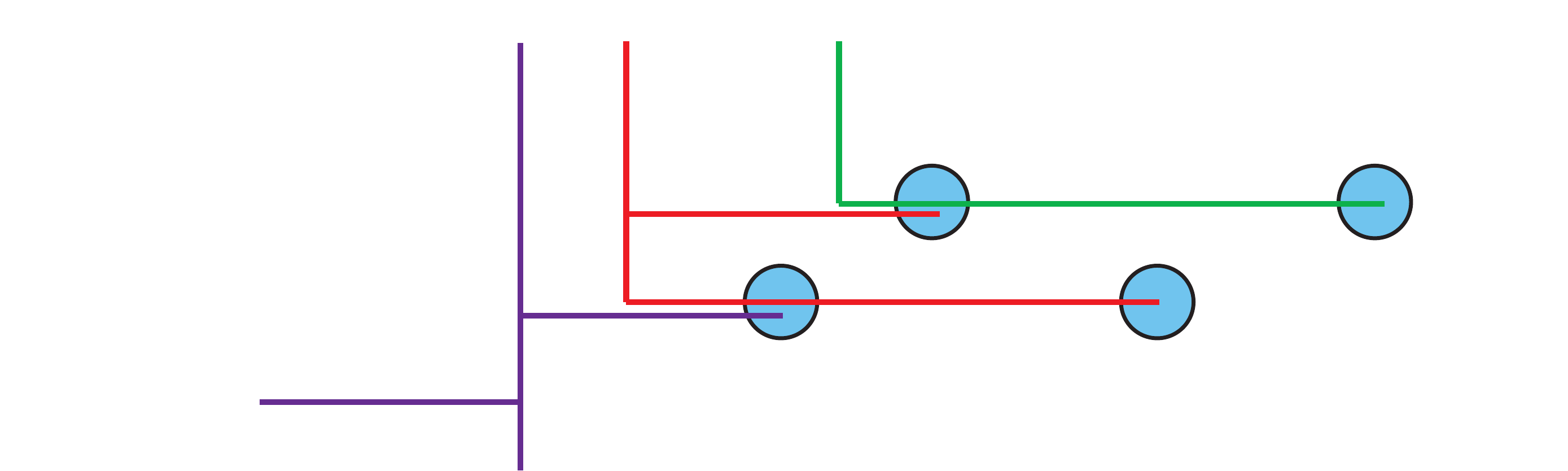}
\put(32,28){{\small $1$}}
\put(39,28){{\small $2$}}
\put(52.3,28){{\small $1$}}
\put(25,18){{\small \textcolor{purp}{$T_2$}}}
\put(60,5){{\small \textcolor{red}{$T_1$}}}
\end{overpic}
\end{center}
\caption{\label{fig:artifact} Illustration of overlaps between tree 
structures for different columns. For the four shaded robots to be moved,
one must be moved through the left most vertical path, two must be moved
through the middle vertical path, and the last must be moved through the 
right most vertical path.}  
\end{figure}

We state some readily observable properties of overlapping trees:
{\em (i)} two trees may have at most one overlapping horizontal branch
(otherwise, there must be a path crossover), {\em (ii)} because of {\em (i)}, 
any three trees cannot pair wise overlap at different rows, and {\em (iii)} 
there must be at least one tree that is not a follower, e.g., the left 
(purple) tree in Fig.~\ref{fig:artifact}. We call this tree a {\em leader}.
From a leader tree, we can recursively collect its followers, and the 
followers of these followers, and so on so forth. We call such a collection 
an {\em interacting bundle} (e.g., Fig.~\ref{fig:artifact}). 

With these properties in mind, the group operation in a \sag iteration is 
carried out as follows. Because robots to be moved from $G_2$ to $G_1$ are 
on straight vertical paths, there are no interactions among them between 
different trees. Therefore, we only need to consider interactions of robots
on $G_1$. For trees that have no overlap with other trees, 
Theorem~\ref{t:tree-shift} directly applies to complete the robot exchange 
on these trees in $O(m_{\ell})$ steps because each tree has diameter at most 
$2m_{\ell}$. In parallel, we can also complete the movement of all robots that 
should go from $G_1$ to $G_2$ which are not residing on a horizontal 
tree branch that overlaps with other trees, also in $O(m_{\ell})$ steps. 
After these robots are exchanged, we can effectively forget about them. 

After the previous step, we are left to deal with robots on overlapping 
horizontal tree branches that must be moved (e.g., the shaded robots in 
Fig.~\ref{fig:artifact}). It is clear that different interacting bundles 
do not have any interactions; we only need to focus on a single bundle.  
This is actually straightforward; we use the example from 
Fig.~\ref{fig:bundle} to facilitate the proof explanation. The routing of 
robots in this case follows a greedy approach starting from the left most 
tree, i.e., we essentially try to ``flush'' the shaded robots in the left-up
direction, which can always be realized in two phases, each of which using at 
most $O(m_{\ell})$ makespan. 
\begin{figure}[htp]
\begin{center}
\begin{overpic}[width=0.5\textwidth,tics=5]{./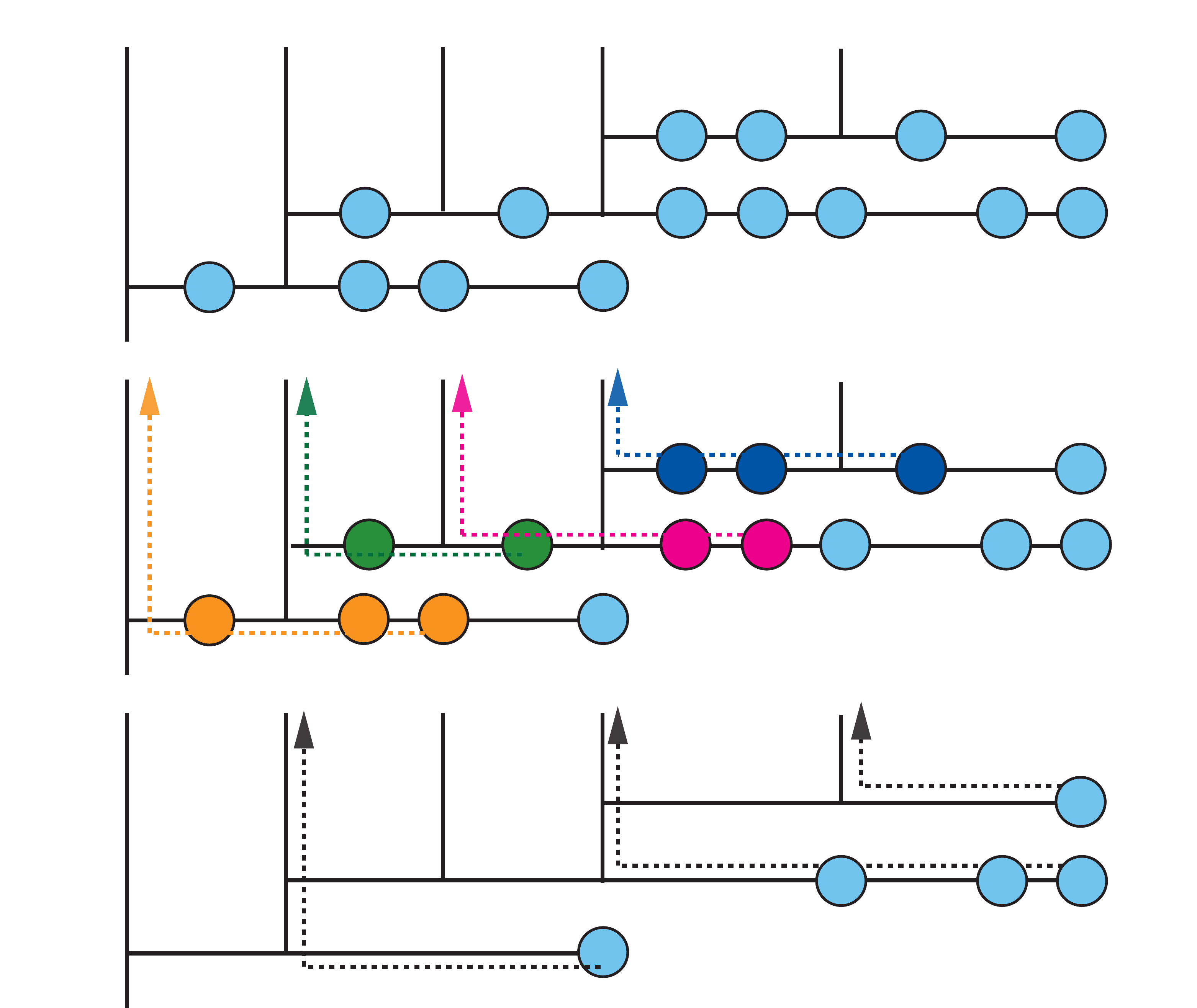}
\put(9.7,82){{\small $3$}}
\put(23,82){{\small $3$}}
\put(36,82){{\small $2$}}
\put(49.4,82){{\small $6$}}
\put(69.2,82){{\small $1$}}
\end{overpic}
\end{center}
\caption{\label{fig:bundle} An example interacting bundle in detail. The
top row is the initial configuration of the robots on the overlapping
horizontal tree branches to be moved through the vertical paths. 
The numbers on the top of the vertical paths mark how many robots should be 
moved through that path. For example, $3$ of the shaded robots must be routed 
through the left most vertical path. The middle row and the bottom rows mark 
how the exchanges can be completed in two steps or phases.}  
\end{figure}

Observe that the problem can be solved for the leader tree (left most tree 
in Fig.~\ref{fig:bundle}). At the same time, for each successive follower 
tree, the movement of robots can be partially solved for these follower 
trees. The middle row of Fig.~\ref{fig:bundle} shows how this can be done 
for each tree. Formally, if a horizontal branch is shared by two trees, say 
$T_2$ and its follower $T_1$, then we obtain a simple exchange problem of 
moving a few robots through a path on $T_2$. In the figure, these are the 
first and fourth trees from the left, with the dotted lines marking the 
path. If a horizontal branch is shared by three or more trees, we get an 
exchange problem on a tree. In the figure, the middle three trees create 
such a problem. For this, Theorem~\ref{t:tree-shift} applies with minor 
modifications. All the exchange problems can be carried out in parallel 
because there is no further interaction between them. It takes $O(m_{\ell})$ 
steps to complete, after which we are left with another set of exchange 
problems, each of which is on a path (e.g., the three problems in the 
last row of Fig.~\ref{fig:bundle}). Lemma~\ref{l:line-shift} applies to 
yield $O(m_{\ell})$ required steps. 

Stitching everything together, the first iteration of \sag on an $m_{\ell} 
\times m_s$ grid can be completed in $O(m_{\ell})$ steps. In the next 
iteration, we are working with grids of sizes $\lceil m_{\ell}/2 
\rceil \times m_s$, which requires $O(\max\{\lceil m_{\ell}/2 \rceil, m_s\})$
steps. Following the simple recursion, which terminates after 
$O(\log m_{\ell})$ iterations, we readily obtain that $O(m_{\ell})$ steps are 
sufficient for solving the entire problem. ~\qed
\end{proof}

\section{Complexity and Solution Optimality Properties of the \sag Algorithm}
\label{section:complexity}

In this section, we establish two key properties of \sag, namely, its 
polynomial running-time and asymptotic solution optimality. 
\subsection{Time Complexity of \sag}
The \sag algorithm is outlined in Algorithm~\ref{alg:grid}, which summarizes
the results from Section~\ref{section:makespan-algorithm} in the form of an algorithm. At 
Lines~\ref{line:split}-\ref{line:match}, a partition of the current grid 
$G$ is made, over which initial path planning is performed to generate the
trees for grouping the robots into the proper subgraph. Then, at 
Line~\ref{line:remove-crossover}, crossovers are resolved. At 
Line~\ref{line:schedule}, the final paths are scheduled, from which the 
robot moves can be extracted. This step also yields where 
each robot will end up at in the end of the iteration, which becomes the 
initial configuration for the next iteration (if there is one). 
After the main iteration steps are complete, at 
Lines~\ref{line:recurse}-\ref{line:self}, the algorithm recursively 
calls itself on smaller problem instances. The special case here is when 
the problem is small enough (Line~\ref{line:solve-small}), in which case 
the problem is directly solved without further splitting. 
\begin{algorithm}
    \SetKwInOut{Input}{Input}
    \SetKwInOut{Output}{Output}
    \SetKwComment{Comment}{\%}{}
    \Input{$G = (V, E)$: an $m_{\ell} \times m_s$ grid graph \\ 
		$X_I$: initial robot configurations\\
		$X_G$: goal robot configurations\\}
    \Output{$M = \langle M_1, M_2, \ldots \rangle$: a sequence of {\em moves} }

\vspace{0.1in}
		
\Comment{\small Run matching and construct initial trees}
\vspace{0.025in}
$(G_1, G_2) \leftarrow \textsc{Split}(G)$ \label{line:split}\\
$\mathcal P \leftarrow \textsc{MatchAndPlanPath}(G, X_I, X_G)$ \label{line:match}
\vspace{0.075in}

\Comment{\small Remove crossovers}
\vspace{0.025in}
$\mathcal P' \leftarrow \textsc{ResolveCrossovers}(\mathcal P)$\label{line:remove-crossover}
\vspace{0.075in}

\Comment{\small Schedule the sequence of moves}
\vspace{0.025in}
$(M, X_I')\leftarrow \textsc{ScheduleMoves}(\mathcal P')$\label{line:schedule}
\vspace{0.075in}

\Comment{\small Recursively solve smaller sub-problems}
\ForEach{$G_i, i = 1, 2$}{\label{line:recurse}
\uIf{$row(G_i) \le 3$ and $col(G_i) \le 3$}
{$M = M + \textsc{Solve}(G_i, X_I'|_{G_i}, X_G|_{G_i})$}\label{line:solve-small}
\Else{$M = M + \sag(G_i, X_I'|_{G_i}, X_G|_{G_i})$}\label{line:self}
}

\vspace{0.075in}

\Return{$M$}
\caption{\sagalgo($G$, $X_I$, $x_G$)} \label{alg:grid}
\end{algorithm}

We now proceed to bound the running time of \sag. 
It is straightforward to see that 
the {\sc Split} routine takes $O(|V|) = O(m_{\ell}m_s)$ running time. 
\textsc{MatchAndPlanPath} can be implemented using the standard Hungarian 
algorithm \cite{Kuh55}, which runs in $O(|V|^3)$ time. 

For \textsc{ResolveCrossovers}, we may implement it by starting with an 
arbitrary robot that needs to be moved across the split line and check 
whether the path it is on has crossovers that need to be resolved. Checking 
one path with another can be done in constant time because each path has only 
two straight segments. Detecting a crossover then takes up to $O(|V|)$ running 
time. We note that, as a crossover is resolved, one of the two paths will end 
up being shorter (see, e.g., Fig.~\ref{fig:crossover-removal}). We then repeat 
the process with this shorter path until no more crossover exists. Naively, 
because the path keeps getting shorter, this process will end in at most 
$O(|V|)$ steps. Therefore, all together, \textsc{ResolveCrossovers} can be 
completed in $O(|V|^3)$ time. 

The \textsc{ScheduleMoves} routine simply extracts information from the 
already planned path set $\mathcal P'$ and can be completed in $O(|V|)$ running time. 
The \textsc{Solve} routine takes constant time. 

Adding everything up, an iteration of \sag can be carried out in $O(|V|^3)$ 
time using a naive implementation. Summing over all iterations, the total 
running time is 
\[
O(|V|^3) + 2O((\frac{|V|}{2})^3) + 4O((\frac{|V|}{4})^3) + \ldots = O(|V|^3),
\]
which is low-polynomial with respect to the input size. 

\subsection{Optimality Guarantees}
Having established that \sag is a polynomial time algorithm that solves \mpp
with sub-linear makespan, we now show that \sag is an $O(1)$-approximate 
makespan optimal algorithm for \mpp in the average case. Moreover, \sag 
also computes a constant factor distance optimal solution in a weaker sense. 
To establish these, we first show that for fixed $m_{\ell}m_s$ that is large, 
the number of \mpp instances with $o(m_{\ell} + m_s)$ makespan is negligible. 

\begin{lemma}\label{l:distribution}
The fraction of \mpp instances on a fixed graph $G$ as an 
$m_{\ell}\times m_s$ grid with $o(m_{\ell}+m_s)$ makespan is no more than 
$(\frac{1}{2})^{\frac{m_{\ell}}{4}}$ for sufficiently large $m_{\ell}m_s$. 
\end{lemma}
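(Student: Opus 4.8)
The plan is to translate the makespan constraint into a purely combinatorial \emph{crossing} constraint via a cut-capacity argument, and then show by a direct counting estimate that only an exponentially small fraction of target configurations can satisfy that constraint. Throughout, note that an \mpp instance on the fixed grid $G$ is determined by the pair $(X_I, X_G)$, and that the makespan depends only on the permutation $\pi = X_G \circ X_I^{-1}$ (relabeling the robots, i.e.\ composing on the right, does not change the number of moves needed). Hence the fraction of instances meeting a given makespan bound, taken over all pairs, equals the fraction over permutations $\pi$ of $V$ with $X_I$ held fixed, and I would count the latter.

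First I would set up the cut. Orient the grid so that the long side has $m_{\ell}$ columns, and split it down the middle into a left block $L$ and a right block $R$, each containing $a \approx m_{\ell} m_s / 2$ vertices; the cut is crossed by exactly $m_s$ edges, one per row. The movement rules force at most one robot to traverse each crossing edge per step, so in a solution of makespan $T$ at most $m_s T$ robots can move from $L$ to $R$ over the entire execution. Consequently the crossing number $c(\pi) := |\{v \in L : \pi(v) \in R\}|$ satisfies $c(\pi) \le m_s T =: C$. Since $m_{\ell} \ge m_s$ gives $o(m_{\ell} + m_s) = o(m_{\ell})$, the hypothesis $T = o(m_{\ell}+m_s)$ yields $C = o(m_{\ell} m_s) = o(a)$, i.e.\ $C$ is a vanishing fraction of $a$.

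Next I would count permutations with small crossing number. Grouping by the image set $\pi(L)$ shows that the number of $\pi$ with $c(\pi) = f$ is exactly $\binom{a}{f}^2 (a!)^2$, so $\Pr[c(\pi) = f] = \binom{a}{f}^2 / \binom{2a}{a}$; in particular the typical crossing number concentrates around $a/2 = \Theta(m_{\ell} m_s)$, far above $C$. Since $\binom{a}{f}$ increases for $f \le a/2$, I would bound the lower tail by
\[
\Pr[c(\pi) \le C] \;\le\; (C+1)\,\frac{\binom{a}{C}^2}{\binom{2a}{a}} \;\le\; (C+1)\,(2a)\,\frac{(ea/C)^{2C}}{4^{a}},
\]
using $\binom{a}{C} \le (ea/C)^C$ and $\binom{2a}{a}\ge 4^a/(2a)$. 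Writing $T = \epsilon\, m_{\ell}$ with $\epsilon \to 0$, the exponent satisfies $2C\log_2(ea/C) = m_{\ell} m_s\,[\,2\epsilon\log_2(e/2\epsilon)\,] = o(m_{\ell} m_s)$, while the denominator contributes $-\log_2 4^{a} = -m_{\ell} m_s$. Hence $\log_2 \Pr[c(\pi)\le C] \le -(1-o(1))\,m_{\ell} m_s$, which for sufficiently large $m_{\ell} m_s$ is at most $-\tfrac12 m_{\ell} m_s \le -m_{\ell} \le -\tfrac{m_{\ell}}{4}$ (using $m_s \ge 2$), giving the claimed bound $(\tfrac12)^{m_{\ell}/4}$ with room to spare.

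The hard part will be the cut-capacity step: making precise that a makespan-$T$ solution can shuttle at most $m_s T$ robots across the balanced cut, which is exactly what converts the temporal makespan bound into the static crossing-number bound that the counting can attack. The counting itself is routine once the distribution $\binom{a}{f}^2/\binom{2a}{a}$ is identified; the only bookkeeping subtlety is the $O(m_s)$ imbalance between $|L|$ and $|R|$ when $m_{\ell}$ is odd, which perturbs the weights negligibly and does not affect the asymptotics.
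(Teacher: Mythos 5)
Your proposal is correct, and it proves the lemma by a genuinely different mechanism than the paper. The paper's proof fixes $X_G$ (WLOG a row-major ordering, using the same reduction you make to the permutation $\pi = X_G \circ X_I^{-1}$) and exploits the \emph{per-robot} obstruction that makespan is at least the largest individual displacement: for an instance to have makespan below $\frac{1}{4}(m_{\ell}+m_s)$, each of the first $\frac{m_{\ell}}{4}$ robots must start within that distance of its goal, an event argued to have conditional fraction roughly $\frac{1}{4}$ per robot, and multiplying over $\frac{m_{\ell}}{4}$ robots gives $(\frac{1}{2})^{\frac{m_{\ell}}{4}}$. You instead use a \emph{global} obstruction, bisection throughput: since configurations are injective (ruling out two robots sharing a cut edge in the same direction) and condition \emph{(ii)} forbids head-on sharing, each of the $m_s$ cut edges carries at most one robot per step, so a makespan-$T$ solution forces crossing number $c(\pi) \le m_s T$; your exact count $\binom{a}{f}^2 (a!)^2$ for permutations with $c(\pi)=f$ is right (it is the $\binom{a}{f}^2/\binom{2a}{a}$ hypergeometric-type law, consistent with Vandermonde's identity), and the tail estimate is sound, yielding $2^{-(1-o(1))m_{\ell}m_s}$, which strictly subsumes the claimed $(\frac{1}{2})^{\frac{m_{\ell}}{4}}$. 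Comparing the two: the paper's argument is shorter and more elementary but admittedly loose (the ``about $\frac{1}{4}$'' conditional fractions and the dependence among the per-robot events are only sketched; the paper itself calls the lemma conservative), while yours is fully rigorous, gives a much stronger $2^{-\Theta(|V|)}$ bound, and is robust even to a fixed small ratio $T/m_{\ell}$ rather than only $T = o(m_{\ell})$. One remark: the step you flag as ``the hard part'' --- the cut-capacity bound --- is actually immediate from the movement rules as noted above, so your proof is complete as written; the odd-$m_{\ell}$ imbalance you mention is indeed negligible since the cut still has exactly $m_s$ edges and $|L|,|R| = \frac{1}{2}m_{\ell}m_s \pm O(m_s)$.
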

\begin{proof}Let $(G, X_I, X_G)$ be an \mpp instance wth $G$ being a 
$m_{\ell}\times m_s$ grid. Without loss of generality, we may assume that 
$X_I$ is arbitrary and $X_G$ is a row-major ordering of the robots, i.e., 
with the $i$-th row containing the robots labeled $(i-1)m_{\ell} + 1, 
(i-1)m_{\ell} + 2, \ldots, im_{\ell}$, in that order. Then, over all 
possible instances, $\frac{1}{4}$ of instances 
have $X_I$ with robot $1$ having a distance of $\frac{1}{4}(m_{\ell} + m_s)$ 
or less from robot $1$'s location in $X_G$ (note that the number is 
$\frac{1}{4}$ instead of $\frac{1}{16}$ because $m_s$ maybe as small as 2). 
Let these $\frac{1}{4}$ instances be $\mathcal P_1$. Among $\mathcal P_1$, 
again about $\frac{1}{4}$ of instances have have $X_I$ with robot $2$ having 
a distances of $\frac{1}{4}(m_{\ell} + m_s)$ or less from robot $2$'s location in
$X_G$. Following this reasoning and limiting to the first $\frac{m_{\ell}}{4}$ 
robots, we may conclude that the fraction of instances with $o(m_{\ell} + m_s)$ 
makespan is no more than $(\frac{1}{2})^{\frac{m_{\ell}}{4}}$ for sufficiently 
large $m_{\ell}m_s$.  ~\qed
\end{proof}

Lemma~\ref{l:distribution} is conservative but sufficient for establishing 
that \sag delivers an average case $O(1)$-approximation for makespan. That 
is, since only an exponentially small number of instances have small makespan, 
the average makespan ratio is clearly constant, that is, 

\begin{theorem}\label{t:ave-const-makespan}On average and in polynomial time, 
\sag computes $O(1)$-approximate makespan optimal solutions for \mpp. 
\end{theorem}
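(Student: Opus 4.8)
The plan is to combine the two facts already established: Theorem~\ref{t:grid-linear-makespan} guarantees that \sag returns a solution of makespan $O(m_\ell)$ on \emph{every} instance, while Lemma~\ref{l:distribution} says that all but an exponentially small fraction of instances have optimal makespan $\Omega(m_\ell+m_s)$. Writing the approximation ratio of an instance $(G,X_I,X_G)$ as $\rho=\mu_{\mathrm{alg}}/\mu_{\mathrm{opt}}$, where $\mu_{\mathrm{alg}}$ is the makespan of the \sag solution and $\mu_{\mathrm{opt}}$ the optimal makespan, I want to bound the average of $\rho$ over all \mpp instances by a constant. By relabeling symmetry it suffices to fix $X_G$ (the row-major ordering used in Lemma~\ref{l:distribution}) and average over the $|V|!$ choices of $X_I$, since every fixed target yields the same per-target average and the overall average is a convex combination of these; the single degenerate instance $X_I=X_G$, for which both makespans vanish, is simply excluded (or assigned ratio $1$).

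First I would split the instances at the threshold $\tfrac{1}{4}(m_\ell+m_s)$ from Lemma~\ref{l:distribution}. For the \emph{typical} class, those with $\mu_{\mathrm{opt}}\ge\tfrac{1}{4}(m_\ell+m_s)$, the numerator is $O(m_\ell)$ by Theorem~\ref{t:grid-linear-makespan} and the denominator is at least $\tfrac{1}{4}(m_\ell+m_s)\ge\tfrac{1}{4}m_\ell$, so $\rho=O(1)$ on every such instance; these therefore contribute at most $O(1)$ to the average.

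Next I would bound the \emph{atypical} class, those with $\mu_{\mathrm{opt}}<\tfrac{1}{4}(m_\ell+m_s)$. Lemma~\ref{l:distribution} caps their number at $(\tfrac{1}{2})^{m_\ell/4}\,|V|!$. For each I use only the trivial lower bound $\mu_{\mathrm{opt}}\ge 1$ (any non-degenerate instance needs at least one move), so that $\rho\le O(m_\ell)$. Their aggregate contribution to the average is thus at most $(\tfrac{1}{2})^{m_\ell/4}\cdot O(m_\ell)$, which is $o(1)$ because exponential decay dominates the polynomial factor as $m_\ell m_s\to\infty$. Adding the two contributions gives an average ratio of $O(1)+o(1)=O(1)$, and the polynomial running time is exactly the $O(|V|^3)$ bound established in Section~\ref{section:complexity}.

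Since the argument is a short aggregation, the main obstacle is not the inequality chain itself but pinning down its premises cleanly: committing to the ``average of the ratios'' model, justifying that the choice of $X_G$ is immaterial by relabeling symmetry, disposing of the $X_I=X_G$ corner case, and---most importantly---making the crude per-instance bound $\rho\le O(m_\ell)$ for atypical instances rigorous, so that multiplying by the exponentially small fraction genuinely yields $o(1)$ rather than hiding a growing quantity inside the $O(\cdot)$.
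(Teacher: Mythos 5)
Your proposal is correct and follows essentially the same route as the paper's own proof: both partition the instances using Lemma~\ref{l:distribution}, bound the typical class's ratio by $O(m_{\ell}+m_s)/\Omega(m_{\ell}+m_s) = O(1)$, bound the exponentially rare class's contribution by $(\tfrac{1}{2})^{m_{\ell}/4}\cdot O(m_{\ell}+m_s) = o(1)$ using the trivial lower bound $\mu_{\mathrm{opt}}\ge 1$, and invoke the $O(|V|^3)$ running-time analysis for the polynomial-time claim. Your additional tidy-ups---fixing $X_G$ by relabeling symmetry and excluding the degenerate instance $X_I=X_G$---are points the paper leaves implicit, not a change of method.
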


\begin{proof}
For a fixed $G$ as an $m_{\ell} \times m_s$ grid, Lemma~\ref{l:distribution}
says that no more than a $(\frac{1}{2})^{\frac{m_{\ell}}{4}}$ fraction of 
instances have sub-linear makespan (the minimum possible makespan is $1$). For
the rest of the instances, their makespan is linear, i.e., $\Omega(m_{\ell}+m_s)$.  
On the other hand, Algorithm~\ref{alg:grid} guarantees a makespan of 
$O(m_{\ell} + m_s)$. We may then compute the expected (i.e., average) makespan 
ratio over all instances of \mpp for the same $G$ as 
no more than (for sufficiently large $m_{\ell}$)
\[
(\frac{1}{2})^{\frac{m_{\ell}}{4}}\frac{O(m_{\ell}+m_s)}{1}
+ (1-(\frac{1}{2})^{\frac{m_{\ell}}{4}})\frac{O(m_{\ell}+m_s)}{\Omega(m_{\ell}+m_s)}
= O(1).
\]
This establishes the claim of the theorem. 
~\qed
\end{proof}

\sag can also provide guarantees on total distance optimality. In this case, 
because every robot contributes to the total distance, a weaker guarantee
is ensured (in the case of makespan, one robot's makespan dominates the 
makespan of all other robots). This leads us to work with a typical (i.e., 
average) \mpp instance instead of working with averages of makespan optimality 
ratio over all instances. That is, for the makespan case, we first compute 
the optimality ratio for each instance, which is subsequently averaged. For 
total distance, we work with a typical random instance and compute the 
optimality ratio for such a typical instance. 

\begin{theorem}\label{t:exp-const-distance}For an average \mpp instance,
\sag computes an $O(1)$-approximate total distance optimal solution. 
\end{theorem}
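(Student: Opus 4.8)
The plan is to sandwich \sag's total distance between two quantities that are both $\Theta(m_\ell^2 m_s)$ for a typical instance, so that their ratio is $O(1)$. Concretely, I would (i) give a deterministic, instance-independent upper bound of $O(m_\ell^2 m_s)$ on the total distance traveled by \sag on any $m_\ell \times m_s$ grid, and (ii) show that the optimal total distance $\mathrm{OPT}$ is $\Omega(m_\ell^2 m_s)$ for all but a vanishing fraction of instances. Since $m_\ell \ge m_s$, we have $m_\ell + m_s = \Theta(m_\ell)$ and $N = m_\ell m_s$, so $\Theta(m_\ell^2 m_s) = \Theta\!\big(N(m_\ell+m_s)\big)$; this is the ``one unit of travel per robot per grid dimension'' scale that both the algorithm and the optimum inherit, and it is the quantity I would track throughout.

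For the upper bound I would not even need a recursion. By Theorem~\ref{t:grid-linear-makespan}, the \emph{entire} execution of Algorithm~\ref{alg:grid} finishes within makespan $O(m_\ell)$, and since a robot advances by at most one edge per time step, every robot travels at most $O(m_\ell)$ edges over the whole run. Summing over the $N = m_\ell m_s$ robots gives a total distance of at most $N\cdot O(m_\ell) = O(m_\ell^2 m_s)$ on \emph{every} instance. This crude bound is convenient precisely because the makespan already caps each robot's per-run travel, so it absorbs for free any ``wasted'' movement by robots that are merely displaced and later restored by the herding and tree-exchange routines of Lemma~\ref{l:herd}, Lemma~\ref{l:line-shift}, and Theorem~\ref{t:tree-shift}.

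For the lower bound I would use that each robot must traverse at least the grid (Manhattan) distance between its start and goal, so $\mathrm{OPT} \ge \sum_{i} d\big(X_I(i), X_G(i)\big)$. For a uniformly random \mpp instance the expected distance of a single robot is $\Theta(m_\ell + m_s)$, since the horizontal and vertical displacements of two independent uniform grid points have expectations $\Theta(m_\ell)$ and $\Theta(m_s)$; hence the expected sum is $\Theta\!\big(N(m_\ell+m_s)\big) = \Theta(m_\ell^2 m_s)$. To convert this expectation into a statement about a \emph{typical} instance, I would invoke concentration: viewing the sum as a function of the random assignment, any single reassignment changes it by at most the diameter $O(m_\ell)$, so a bounded-difference (permutation McDiarmid) inequality bounds fluctuations by $O\!\big(m_\ell\sqrt{N}\big) = o(m_\ell^2 m_s)$ once $N$ is large. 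Thus all but an exponentially small fraction of instances satisfy $\mathrm{OPT} = \Omega(m_\ell^2 m_s)$, in the same conservative spirit as Lemma~\ref{l:distribution}.

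Combining the two bounds, for a typical instance the total-distance optimality ratio of \sag is at most $O(m_\ell^2 m_s)/\Omega(m_\ell^2 m_s) = O(1)$. I expect the lower-bound half, not the upper bound, to be the main obstacle: one must (a) pin down the expected single-robot grid distance as $\Theta(m_\ell+m_s)$ cleanly, and (b) justify that the $\Omega(m_\ell^2 m_s)$ bound holds for a typical instance rather than merely in expectation. The concentration argument over the random matching is the real technical content here, and it is also what forces the weaker ``typical instance'' phrasing: unlike the makespan case, where a single far-away robot dominates and one can average the optimality ratio instance by instance, for total distance every robot contributes, so the guarantee is naturally stated for a representative random instance.
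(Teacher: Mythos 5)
Your proposal is correct, and its skeleton is exactly the paper's: the paper also gets the upper bound by combining the $O(m_{\ell})$ makespan guarantee with the fact that each of the $m_{\ell}m_s$ robots moves at most one edge per step (so total distance $O(m_sm_{\ell}^2)$ on every instance), and gets the lower bound by noting that for an average instance each robot has Manhattan distance $\Omega(m_{\ell})$ between start and goal, so the optimum is $\Omega(m_sm_{\ell}^2)$. Where you go beyond the paper is the last step: the paper's proof stops at the statement ``in expectation, the minimum total distance is $\Omega(m_sm_{\ell}^2)$'' and implicitly identifies ``an average \mpp instance'' with this expectation, whereas you add a concentration argument (a bounded-difference inequality over random permutations) to upgrade the expectation to a typical-instance guarantee. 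That addition is sound and the numbers check out: a single swap in the permutation pairing starts to goals changes $\sum_i d(X_I(i),X_G(i))$ by at most $O(m_{\ell})$, so with $N = m_{\ell}m_s$ robots the fluctuations are $O(m_{\ell}\sqrt{N}) = o(m_{\ell}^2 m_s)$ and the deviation probability is $\exp(-\Theta(N))$, so all but an exponentially small fraction of instances indeed have optimal total distance $\Omega(m_{\ell}^2 m_s)$. This makes your version strictly more rigorous than the paper's one-paragraph proof, at the cost of invoking permutation McDiarmid; the paper buys brevity by settling for the weaker ``average instance'' reading, which is consistent with how it frames Theorem~\ref{t:exp-const-distance} as a weaker guarantee than the makespan result. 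One small point of care: the per-robot marginal is a uniform pair of vertices only marginally (the pairs across robots are dependent through the bijection constraint), which is precisely why the permutation version of the bounded-difference inequality, rather than independent-coordinate McDiarmid, is the right tool --- and you chose it correctly.
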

\begin{proof}
For an average \mpp instance, each robot incurs a minimum travel 
distance of $\Omega(m_{\ell})$; therefore, the minimum total distance for 
all robots, in expectation, is $\Omega(m_sm_{\ell}^2)$ because there are 
$m_sm_{\ell}$ robots. On the other hand, because \sag produces a solution with an 
$O(m_{\ell})$ makespan, each robot travels a distance of $O(m_{\ell})$. 
Summing this over all robots, the solution from \sag has a total distance 
of $O(m_sm_{\ell}^2)$. This matches the lower bound $\Omega(m_sm_{\ell}^2)$ .  ~\qed
\end{proof}

\section{Extensions}\label{section:general}
In this section, we show that \sag readily generalizes to environments other 
than 2D rectangular grids, including high dimensional grids and continuous 
environments. 
\subsection{High Dimensions}
\sag can be extended to work for grids of arbitrary dimensions. For 
dimensions $d \ge 2$, let the grid be $m_1 \times \ldots \times m_d$. Two 
updates to \sag are needed to make it work for higher dimensions. First, the 
split line should be updated to a split plane of dimension $d - 1$. In 
the case of $d=2$, two iterations will halve all dimensions. In the case of 
general $d$, $d$ iterations are required. Thus, the approach produces a 
makespan of $O(d(m_1 + \ldots m_d))$ where $m_i$ is the side length of 
$i$-th dimension. Second, the crossover check becomes more complex; each 
check now takes $O(d)$ time instead of $O(1)$ time because each path, though 
still having up to two straight pieces, requires $O(d)$ coordinates to 
describe. Other than these changes, the rest of \sag continues to work with
some minor modifications to the scheduling procedure (which can again be 
proven to be correct using inductive proofs). The updated \sag algorithm for 
dimension $d$ therefore runs in $O(d^2|V|^3)$ time because $d$ iterations of 
split and group are need to halve all dimensions and each iteration takes 
$O(d|V|^3)$ time. The optimality guarantees, e.g., Theorems~\ref{t:ave-const-makespan}
and~\ref{t:exp-const-distance}, also carry over. 

\subsection{Well-Connected Environments}
The selection of $G$ as a grid plays a critical role in proving the 
desirable properties of \sag. In particular, two features of grid graphs 
are used. First, grids are composed of small cycles, which allow the 2-switch 
operation to be carried out locally. This in turn allows multiple 2-switch 
operations to be carried out in parallel. Second, restricting to two adjacent 
rows (or columns) of a rectangular grid (e.g., row 4 and row 5 in 
Fig.~\ref{fig:split-grid}(a)), multiple 2-switches can be completed between
these two rows in a constant number of steps. As long as the 
environment possesses these two features, \sag works. We call such 
environments {\em well-connected}.  

More precisely, a well-connected environment, $\mathcal E$, is one with 
the following properties. Let $G$ be an $m_{\ell}\times m_s$ rectangular 
grid that contains $\mathcal E$. Unlike earlier grids, here, $G$ is not 
required to have unit edge lengths; a cell of $G$ is only required to be 
of rectangular shape with $O(1)$ side lengths. Let $r_1$ and $r_2$ be two 
arbitrary adjacent rows of $G$, and let $c_1 \in r_1$, $c_2 \in r_2$ be 
two neighboring cells (see, e.g., Fig.~\ref{fig:grid-like}). The only 
requirement over $\mathcal E$ is that a robot in $c_1$ and a robot in 
$c_2$ may exchange residing cells locally, without affecting the 
configuration of other robots. In terms of the example in 
Fig.~\ref{fig:grid-like}, the two shaded robots (other robots are not drawn) 
must be able to exchange locations in constant makespan within a region of 
constant radius. The requirement then implies that parallel exchanges of 
robots between $r_1$ and $r_2$ can be performed with a constant makespan. 
The same requirement applies to two adjacent columns of $G$. 
\begin{figure}[htp]
\begin{center}
\begin{overpic}[width=0.4\textwidth,tics=10]{./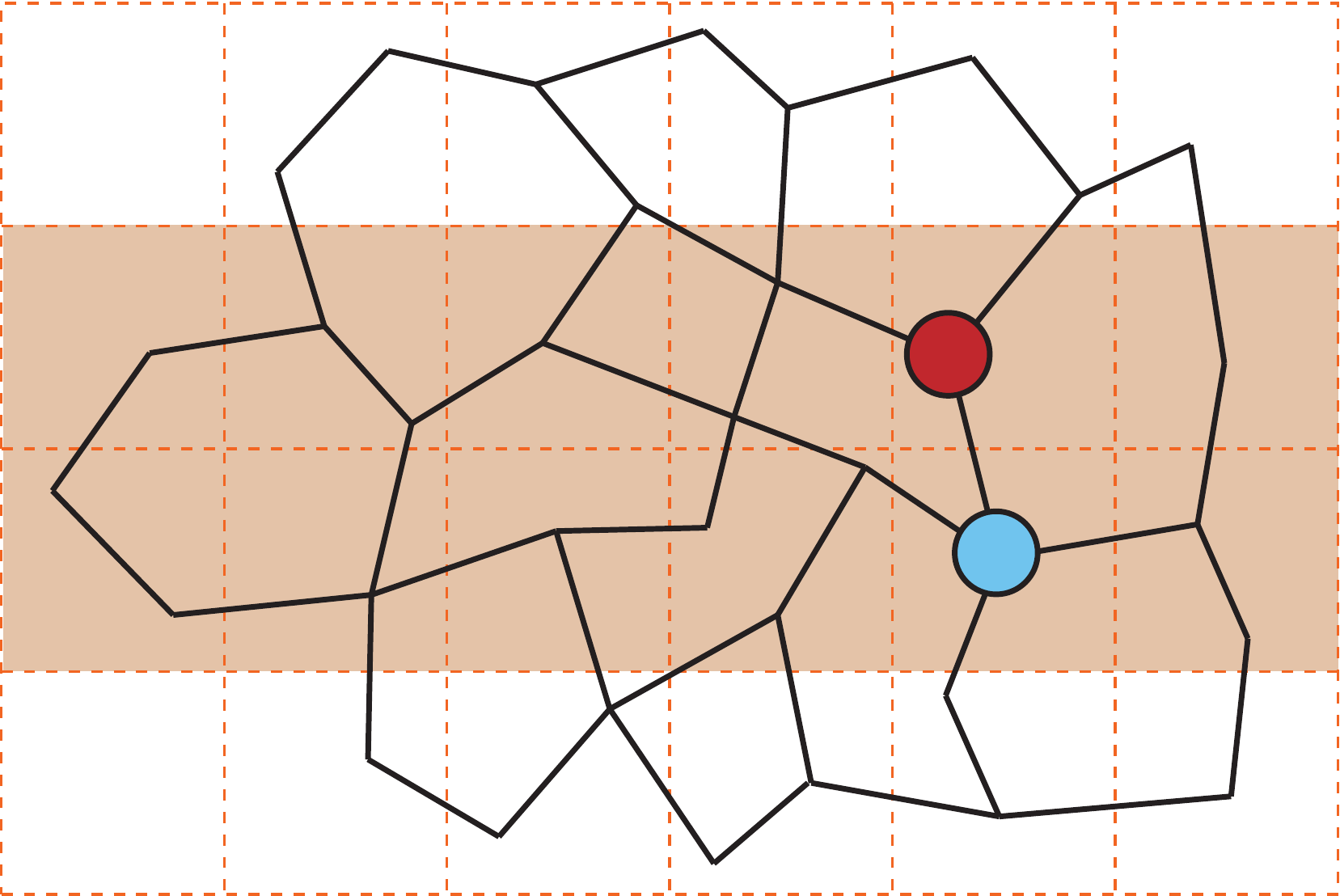}
\put(105,55){{\small \textcolor{orange}{$G$}}}
\put(7,42){{\small $\mathcal E$}}
\put(105,40){{\small \textcolor{orange}{$r_1$}}}
\put(105,25){{\small \textcolor{orange}{$r_2$}}}
\put(77,19){{\small \textcolor{red}{$c_1$}}}
\put(77,36){{\small \textcolor{red}{$c_2$}}}
\end{overpic}
\end{center}
\caption{\label{fig:grid-like}Illustration of a well-connected non-grid graph.}  
\end{figure}
Subsequently, given an arbitrary well-connected environment $\mathcal E$
and an initial robot configuration $X_I$, the steps from \sag can be readily 
applied to reach an arbitrary $X_G$ that is a permutation of $X_I$. As 
long as pairwise robot exchanges can be computed efficiently, the overall 
generalized \sag algorithm also runs efficiently while maintaining the 
optimality guarantees. We note 
that the definition of well-connectedness can be further generalized to 
certain continuous settings. Fig.~\ref{fig:grid-like-ex} provides a discrete 
example and a continuous example of well-connected settings, which include 
both the environment and the robots. 

\begin{figure}[htp]
\begin{center}
\begin{overpic}[width=0.45\textwidth,tics=10]{./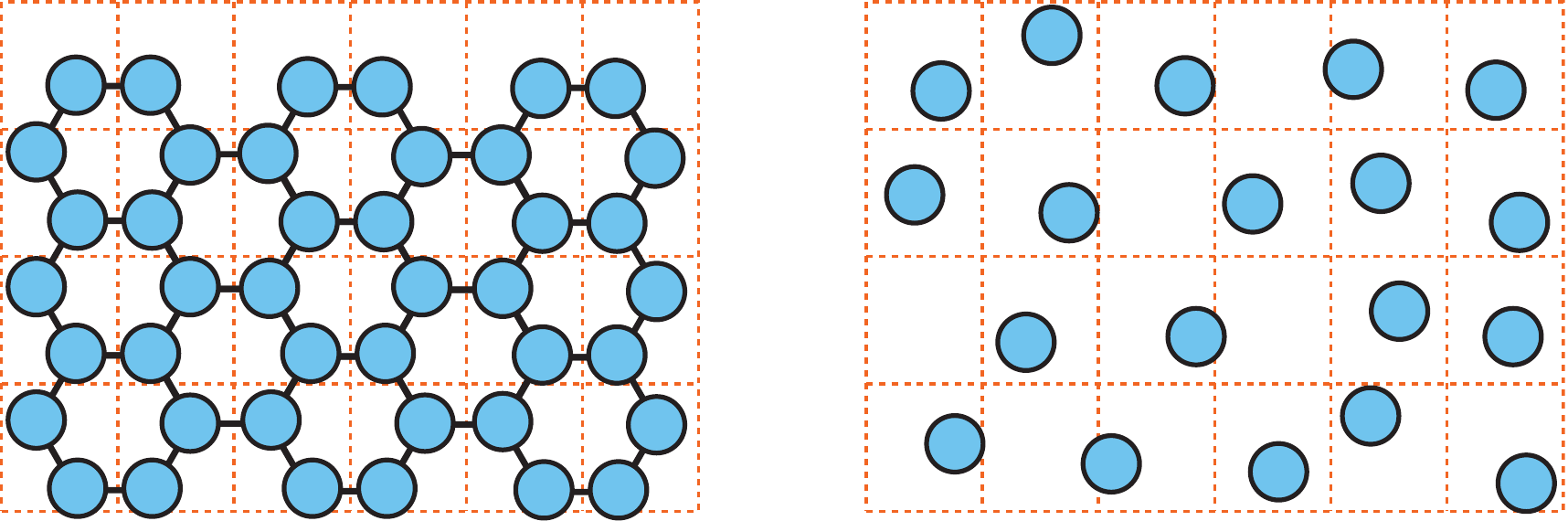}
\end{overpic}
\end{center}
\caption{\label{fig:grid-like-ex}Examples of well-connected settings, with
both environments and robots.}  
\end{figure}

As mentioned in Sec.~\ref{section:introduction}, well-connected environments are 
frequently found in real-world applications, e.g., automated warehouses at 
Amazon and road networks in cities like Manhattan. Our theoretical results 
imply that such environments are in fact quite optimal in their design 
in terms of being able to efficiently route robots. 

\section{Conclusion and Future Work}\label{section:conclusion}
In this work, we developed a low-polynomial time algorithm, \sag, for 
solving the multi-robot path planning problem in grids and grid-like, 
well-connected environments. The solution produced by 
\sag is within a constant factor of the best possible makespan on 
average. In a weaker sense, \sag also provide a constant factor 
approximation on total distance optimality. \sag applies to problems 
with the maximum possible density in graph-based settings and supports 
certain continuous problems as well.

The development of \sag opens up many possibilities for promising 
future work. On the theoretical side, \sag gets us closer to the goal
of a finding a PTAS (polynomial time approximation scheme) for optimal
multi-robot path planning. Also, it would be desirable to remove the 
probabilistic element (i.e., the ``in expectation'' part) from the 
guarantees. On the practical side, noting that we have only looked at
the case with the highest robot density, it is
promising to exploit the combination of global decoupling and network 
flow techniques to seek more optimal algorithms for cases with 
lower robot density.

\bibliographystyle{format/IEEEtran}
\bibliography{jingjin}
\end{document}